\theoremstyle{plain}
\newtheorem{theorem}{Theorem}
\theoremstyle{definition}
\newtheorem{definition}{Definition}
\newtheorem{example}{Example}
\newtheorem{remark}{Remark}
\newcommand{\srv}[1]{\mathsf{#1}}
\newcommand{\srvec}[1]{\mathbf{#1}}
\newcommand{\svec}[1]{\bm{#1}}
\newcommand{\smat}[1]{\bm{#1}}
\newcommand{\indep}{\perp \!\!\! \perp}
\crefname{assumption}{Assumption}{Assumptions}
\icmltitlerunning{Permutation-Based Rank Test in the Presence of Discretization and Application in Causal Discovery with Mixed Data}
\begin{document}

\twocolumn[
\icmltitle{Permutation-Based Rank Test in the Presence of Discretization \\and Application in Causal Discovery with Mixed Data} 
    
% It is OKAY to include author information, even for blind
% submissions: the style file will automatically remove it for you
% unless you've provided the [accepted] option to the icml2025
% package.

% List of affiliations: The first argument should be a (short)
% identifier you will use later to specify author affiliations
% Academic affiliations should list Department, University, City, Region, Country
% Industry affiliations should list Company, City, Region, Country

% You can specify symbols, otherwise they are numbered in order.
% Ideally, you should not use this facility. Affiliations will be numbered
% in order of appearance and this is the preferred way.
\icmlsetsymbol{equal}{*}

\begin{icmlauthorlist}
\icmlauthor{Xinshuai Dong}{cmu}
\icmlauthor{Ignavier Ng}{cmu}
\icmlauthor{Boyang Sun}{mbzuai}
\icmlauthor{Haoyue Dai}{cmu}
\icmlauthor{Guang-Yuan Hao}{mbzuai}
\\
\icmlauthor{Shunxing Fan}{mbzuai}
\icmlauthor{Peter Spirtes}{cmu}
%\icmlauthor{}{sch}
\icmlauthor{Yumou Qiu}{pku}
\icmlauthor{Kun Zhang}{cmu,mbzuai}
%\icmlauthor{}{sch}
%\icmlauthor{}{sch}
\end{icmlauthorlist}

\icmlaffiliation{cmu}{Carnegie Mellon University}
\icmlaffiliation{pku}{Peking University}
\icmlaffiliation{mbzuai}{Mohamed bin Zayed University of Artificial Intelligence}
%\icmlaffiliation{comp}{Company Name, Location, Country}
%\icmlaffiliation{sch}{School of ZZZ, Institute of WWW, Location, Country}

%\icmlcorrespondingauthor{Firstname2 Lastname2}{first2.last2@www.uk}

% You may provide any keywords that you
% find helpful for describing your paper; these are used to populate
% the "keywords" metadata in the PDF but will not be shown in the document
\icmlkeywords{Machine Learning, ICML}

\vskip 0.3in
]

% this must go after the closing bracket ] following \twocolumn[ ...

% This command actually creates the footnote in the first column
% listing the affiliations and the copyright notice.
% The command takes one argument, which is text to display at the start of the footnote.
% The \icmlEqualContribution command is standard text for equal contribution.
% Remove it (just {}) if you do not need this facility.

\icmlcorrespondingauthor{Kun Zhang}{kunz1@cmu.edu}

\printAffiliationsAndNotice{}  % leave blank if no need to mention equal contribution
%\printAffiliationsAndNotice{\icmlEqualContribution} % otherwise use the standard text.

\begin{abstract}
    Recent advances have shown that statistical tests for the rank of cross-covariance matrices play an important role in causal discovery. These rank tests include partial correlation tests as special cases and provide further graphical information about latent variables.
    Existing rank tests typically assume that all the continuous variables can be perfectly measured,
    and yet, in practice many variables can only be measured after discretization.
    For example, in psychometric studies,
        the continuous level of certain personality dimensions of a person can only be measured after being discretized into order-preserving options such as disagree, neutral, and agree.
    Motivated by this, we
    propose \textbf{M}ixed data \textbf{P}ermutation-based \textbf{R}ank \textbf{T}est (MPRT), which properly controls the statistical errors even when some or all variables are discretized.
    Theoretically,
    we establish the 
    exchangeability and 
    estimate the asymptotic null distribution by  permutations;
    as a consequence,
    MPRT can effectively control the Type I error in the presence of discretization while previous methods cannot. 
    Empirically, our method is validated by extensive experiments on synthetic data 
    and real-world data to demonstrate its effectiveness as well as applicability in causal discovery (code will be available at \url{https://github.com/dongxinshuai/scm-identify}).
    \end{abstract}

    \vspace{-2mm}
    \section{Introduction and Related Work}\label{sec:intro}
    \vspace{-1mm}
    Recent advances have shown that the rank of a cross-covariance matrix and its statistical test play 
    essential roles in multiple fields of statistics especially 
    in causal discovery \citep{sullivant2010trek,spirtes2013calculation-t-separation}.
    From one perspective, 
    Independence and Conditional Independence (CI) are crucial concepts in causal discovery and Bayesian network
    learning \citep{pearl2000models,spirtes2000causation,koller2009probabilistic} 
    due to its relation to d-separations \citep{Pearl88},
    and it has been shown that 
    rank tests take those linear CI tests as special cases \citep{sullivant2010trek,di2009t,dong2023versatile}.
    %- in linear causal models, d-saparations can also be inferred by rank test instead of CI tests.
    From another point of view, rank of a cross-covariance matrix corresponds to t-separations in a graph \citep{sullivant2010trek},
    which contain graphical information that can be used to identify latent variables \citep{huang2022latent,dong2023versatile}.
    A more detailed discussion about related work can be found in Appendix~\ref{sec:relatedwork}.

    Existing statistical rank tests \citep{ranktest}
    %for rank of a cross-covariance matrix 
    are often built upon Canonical Correlation Analysis (CCA) \citep{jordan1875essai,hotelling1992relations}, 
    with a likelihood ratio based test statistics.
    Despite their effectiveness, 
    existing methods rely on the strong assumption that all the variables concerned can be perfectly measured.
    However, in many fields, it is often the case that the best available data are just
    discretized approximations of some
    underlying continuous variable (formally defined in Eq.~\ref{eq:discretize}).
    For example, 
    in mental health, anxiety levels are often categorized into levels such as
    mild, moderate, or severe, according to some latent thresholds \citep{johnson2019psychometric}.
    Examples can be found in multiple fields such as finance \citep{changsheng2012investor}, psychology
    \citep{lord2008statistical}, biometrics \citep{finney1952probit} and econometrics \citep{nerlove1973univariate},
    where continuous variables are often assumed to be observed as discretized values.
    % underlies a dichotomous or polychotomous observed one is considered very resonable.
    %
    
    When discretization is present, existing rank tests can hardly work.
    The main reason lies is that the discretized values only reflect the order of the data,
     leading to cross-covariance estimates that may differ significantly from the underlying cross-covariance matrix (also illustrated in Figure~\ref{fig:1}).
     Furthermore, even though the true underlying cross-covariance matrix can be estimated by maximum likelihood-based methods such as 
     polychoric and polyserial correlations \citep{olsson1982polyserial,olsson1979maximum},
    they cannot be directly plugged into existing rank tests. 
    This is because the involved discretization and maximum likelihood processes change the distribution of test statistics
    to a considerable extent and thus the p-values cannot be correctly calculated. 
    As a consequence, Type I errors of existing methods cannot be effectively controlled.
    Both of these points are elaborated in Section~\ref{sec:importance}.

    To properly address the issue of discretization,
    in this paper, we propose a novel statistic rank test based on permutation,
    i.e.,
    Mixed data Permutation-based Rank Test (MPRT) that can accommodate 
    continuous, partially discretized, or 
     fully discretized observations.
    Specifically, in the presence of discretization,
    the underlying cross-covariance can be estimated by maximum likelihood estimator,
    but the information loss resulting from discretization and the additional estimation steps make the 
    derivation of the null distribution highly non-trivial.
    To this end, we start with the continuous case and establish exchangeability of linear projections of concerned variables (Theorem~\ref{thm:exchangeability}),
     based on which the null distribution can be empirically estimated by permutations.
    When some observations are discretized, the exchangeability still holds but we 
    do not have direct access to permutable data. 
    Fortunately, we show that the concerned statistic distribution can still be 
    consistently estimated by properly using permuted discretized observations (Theorem~\ref{thm:statistic_by_perm}).
     We summarize our key contributions as follows.
    %\vspace{-1em}
    \begin{itemize}[leftmargin=*, itemsep=0pt]
    \vspace{-1mm}
    \item
    To our best knowledge,
    we propose the first statistic rank test
    i.e., Mixed data Permutation-based Rank Test (MPRT),
    that properly deals with the problem of discretization.
    Rank test takes partial correlation CI test as a special case and thus 
    the problem is crucial to many scientific fields
     such as psychology, biometrics, and econometrics, where 
    discretizations are ubiquitous.
    % due to limitations in data collection or
    %measurement techniques.
    \item
     Theoretically, we estimate the asymptotic null
     distribution by effectively making use of data permutations,
     and thus properly controls the Type I error.
     The setting considered is rather general: 
     for the test of $\text{rank}(\Sigma_{\srvec{X},\srvec{Y}})$,
    both $\srvec{X}$ and $\srvec{Y}$ are allowed to be either fully continuous,
    partially discretized, or 
     fully discretized. Thus, our method also includes the fully-continuous rank test as a special case. 
    %\vspace{-0.3em}
    \item 
    Empirically,
    we validate our novel rank test under multiple synthetic settings
    where 
    our method is shown to control Type I error properly and Type II error effectively,
    while existing methods cannot.
    We also use a real-world dataset to show the practicability
    of the proposed rank test and illustrate its application in causal discovery. 
    % in finite sample cases.
    \end{itemize}
    
    \vspace{-1em}
\section{Preliminaries}\label{sec:pre}
    
    \subsection{Problem Setting}
    Suppose that we have a set of $M$ observed random variables $\srvec{V}=\{\srv{V}_j\}_{j=1}^{M}$
    that are jointly Gaussian. However, for some of these variables, direct observations are unavailable.
    % and yet  for some of them we do not have direct observations.
    We use $\mathbb{C}_{\srvec{V}}$ and $\mathbb{D}_{\srvec{V}}$ 
    to denote the index set of those variables in $\srvec{V}$  that we have direct observations and that of those 
    we only have order-preserving discretized observations, respectively.
    Assume that we have $N$ i.i.d., observations of these variables.
    The underlying true data matrix is ${\smat{D}}\in\mathbb{R}^{N\times M}$,
    while we only have access to $\tilde{\smat{D}}$,
    where some columns are discretized.
    Specifically, for $j\in \mathbb{C}_{\srvec{V}}$, $\tilde{\smat{D}}_{:,j}={\smat{D}}_{:,j}$,
    while for those $j\in \mathbb{D}_{\srvec{V}}$,
    the observations are discretized in the following fashion:
    \begin{equation}
    \begin{aligned}
      \label{eq:discretize}
      \tilde{\smat{D}}_{i,j} = t,~\text{if}~ T^{j}_{t} < {\smat{D}}_{i,j} \leq T^{j}_{t+1}, \\
      ~\text{for}~i\in \{1,...,N\}, t\in\{1,...,C_{j}\},
    \end{aligned}
    \end{equation}
    \vspace{-1mm}
    where $C_{j}$ is the cardinality of the domain of the discretized observation of ${\srv{V}}_j$, 
    $T^{j}_{t}$ refers to the $t$-th threshold for variable $\srv{V}_j$, 
     $T^{j}_{1}\triangleq -\infty$, and $T^{j}_{C_{j}+1}\triangleq\infty$.
     \looseness=-1
    %This setting has been considered and prevelent in multiple domains.
    %
    %For example, in psychometric studies,
    %given a psychology question, participants are often required to choose 1 out of five options
    % - disagree, slightly disagree, neutral, 
    %slightly agree, and agree, 
    %which is an order-preserving discretized indicator of the corresponding continuous variable that cannot be directly measured.
    
    %
    We are interested in the rank of 
    the population cross-covariance matrix over certain combinations of variables,
    e.g., $\Sigma_{\srvec{X},\srvec{Y}}$, where
     $\srvec{X}\subseteq \srvec{V}$ and $\srvec{Y}\subseteq \srvec{V}$ 
     ($\srvec{X}$ and $\srvec{Y}$ are not necessarily disjoint). 
     The rank information is crucial to causal discovery \citep{spirtes2000causation} 
     and will be detailed in Section~\ref{sec:importance}.
     Ideally, we would expect that we have infinite datapoints
    and there is no discretization;
     in this case, the sample covariance $\hat{\Sigma}_{\srvec{X},\srvec{Y}}$ 
     would be exactly the same as the population covariance,
     and the rank can be easily calculated by linear algebra.
    However, in practice we only have finite datapoints and
     for some of the variables we only have discretized observations.
    Thus, it is crucial to consider the following problem:
      in the finite sample case and in the presence of discretization,
      we only have access to $\tilde{\smat{D}}$ instead of $\smat{D}$,
       how to  build a valid statistic test that properly controls the Type I error
        for testing the rank of a cross-covariance matrix $\Sigma_{\srvec{X},\srvec{Y}}$?
    \looseness=-1

    \subsection{Why this Problem is Important?}
    \label{sec:importance}
    In this section we will briefly discuss why rank test is important in the context of causal discovery as well as 
    why it is crucial to deal with discretization.
    
    \textbf{Rank Test Takses Linear CI Test as a Special Case}
    
    In causal discovery, 
    we aim to find the underlying causal graph among variables given observational data.
    The most classical approach is to use conditional independence (CI) relationships
      to identify d-separations in a graph; see, e.g.,
       the PC algorithm \citep{spirtes2000causation}.
       This idea is captured by the following theorem.
    
    \begin{theorem} [Conditional Independence and D-separation \citep{Pearl88}]
      Under the Markov and faithfulness assumption, for disjoint sets of variables $\srvec{A}$, $\srvec{B}$ and $\srvec{C}$,
      $\srvec{C}$ d-separates $\srvec{A}$ and $\srvec{B}$ in graph $\mathcal{G}$, iff
      $\srvec{A} \indep \srvec{B} | \srvec{C}$ holds for every distribution in the graphical model associated to $\mathcal{G}$.
    \label{theorem:CI and d-sep}
    \vspace{-1mm}
    \end{theorem}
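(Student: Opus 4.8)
The plan is to split the biconditional into its two implications and prove each separately: the forward direction (d-separation $\Rightarrow$ conditional independence) is the \emph{soundness} of d-separation and relies only on the Markov assumption, while the backward direction (conditional independence $\Rightarrow$ d-separation) is its \emph{completeness} and is exactly what the faithfulness assumption supplies. Throughout, ``distribution in the graphical model associated to $\mathcal{G}$'' I read as a distribution whose density factorizes according to $\mathcal{G}$.

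For soundness, I would first reduce the problem to a finite undirected object so that a clean factorization argument applies. Concretely, let $\srvec{S}=\mathrm{An}(\srvec{A}\cup\srvec{B}\cup\srvec{C})$ be the smallest ancestral set containing the three sets; the marginal of any Markov distribution over $\srvec{S}$ still factorizes according to the induced subgraph $\mathcal{G}_{\srvec{S}}$. I would then moralize $\mathcal{G}_{\srvec{S}}$, converting the DAG factorization into a Gibbs factorization over the cliques of the resulting undirected graph. The key combinatorial lemma is that $\srvec{C}$ d-separates $\srvec{A}$ and $\srvec{B}$ in $\mathcal{G}$ if and only if $\srvec{C}$ is an ordinary vertex separator of $\srvec{A}$ and $\srvec{B}$ in this moral graph. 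Given such separation, the clique factorization splits, after conditioning on $\srvec{C}$, into a product of one factor not involving $\srvec{B}$ and one not involving $\srvec{A}$; integrating out the remaining variables then yields $\srvec{A}\indep\srvec{B}\mid\srvec{C}$, which is the standard undirected global Markov property.

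For completeness, I would argue the contrapositive: if $\srvec{C}$ does \emph{not} d-separate $\srvec{A}$ and $\srvec{B}$, then there is an active path, and I must produce a compatible distribution in which $\srvec{A}\indep\srvec{B}\mid\srvec{C}$ fails. To make the quantifier precise, I would exhibit a witness by parameterizing $\mathcal{G}$ as a linear Gaussian model and showing that the covariance entry coupling $\srvec{A}$ and $\srvec{B}$ through the active path is a nonzero polynomial in the edge coefficients, hence nonvanishing for generic parameters. This simultaneously constructs the required counterexample and justifies why faithfulness---the assumption that the observed distribution realizes exactly the d-separations of $\mathcal{G}$---fails only on a measure-zero set of parameterizations.

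The main obstacle is the combinatorial equivalence between d-separation in the DAG and vertex separation in the moral graph of the ancestral set: colliders must be handled correctly, since an inactive collider should \emph{not} lie on a blocking route yet becomes active once it or one of its descendants enters $\srvec{C}$. The ancestral-set restriction together with moralization is precisely the device that encodes this behavior, and verifying that the resulting bookkeeping matches the path-based definition of d-separation is the delicate step. A secondary subtlety, on the completeness side, is the genericity argument ensuring the witness parameters are not exceptional.
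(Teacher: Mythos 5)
This theorem is quoted in the paper as classical background (attributed to Pearl, 1988) and the paper supplies no proof of its own, so there is nothing internal to compare your argument against; I can only assess your sketch on its merits. It is essentially the standard textbook proof and is sound: the forward direction is Lauritzen's soundness argument (restrict to the ancestral set of $\srvec{A}\cup\srvec{B}\cup\srvec{C}$, moralize, use the equivalence of d-separation with vertex separation in the moral graph, and then read off the global Markov property from the clique factorization), and the backward direction is the Geiger--Pearl completeness argument via a linear Gaussian witness whose relevant partial covariance is a not-identically-zero polynomial in the edge coefficients, hence nonzero for generic parameters. One conceptual point you should tighten: as stated, the theorem quantifies over \emph{every} distribution Markov to $\mathcal{G}$, so faithfulness plays no role in either direction --- the universal quantifier is what your witness construction discharges, and your own proof never actually invokes faithfulness. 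Faithfulness would be needed only for the variant in which conditional independence is asserted of a single observed distribution; describing the backward direction as ``exactly what the faithfulness assumption supplies'' conflates these two formulations (a conflation arguably already present in the paper's wording). A second, minor gap is that the genericity claim requires exhibiting at least one parameterization where the path monomial survives, including the case where the active path passes through colliders whose descendants lie in $\srvec{C}$; this is routine but should be stated rather than left implicit.
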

    
    In practice, we often consider linear causal models where the CI test can be done by e.g., Fisher-Z \citep{fisher1921014}.
    It has been shown that, for linear causal models, 
    d-separations between variables can also be uncovered by 
    rank tests, which is summarized in the following theorem.
    
    \begin{theorem} [D-separation by Rank Test \citep{dong2023versatile}]
      \label{thm: rank and d-sep}
         Suppose a linear causal model with graph $\mathcal{G}$ and assume rank faithfulness
          \citep{spirtes2013calculation-t-separation}. 
         For disjoint variable sets $\mathbf{A}$, $\mathbf{B}$, and $\mathbf{C}$, we have 
         $\mathbf{C}$ d-separates
          $\mathbf{A}$ and $\mathbf{B}$ 
          in graph $\mathcal{G}$,
           if and only if $\text{rank}(\Sigma_{\srvec{A}\cup\srvec{C}, \srvec{B}\cup\srvec{C}})=|\srvec{C}|$.
           \vspace{-0mm}
      \end{theorem}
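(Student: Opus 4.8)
The plan is to reduce the rank equality to a conditional-independence statement and then invoke Theorem~\ref{theorem:CI and d-sep}. Write $\mathbf{X}=\mathbf{A}\cup\mathbf{C}$ and $\mathbf{Y}=\mathbf{B}\cup\mathbf{C}$, and order the rows of $\Sigma_{\mathbf{X},\mathbf{Y}}$ as $(\mathbf{A},\mathbf{C})$ and the columns as $(\mathbf{B},\mathbf{C})$, so that
\[
\Sigma_{\mathbf{X},\mathbf{Y}}=\begin{pmatrix}\Sigma_{\mathbf{A},\mathbf{B}} & \Sigma_{\mathbf{A},\mathbf{C}}\\ \Sigma_{\mathbf{C},\mathbf{B}} & \Sigma_{\mathbf{C},\mathbf{C}}\end{pmatrix}.
\]
Since the observed variables are jointly Gaussian and non-degenerate, $\Sigma_{\mathbf{C},\mathbf{C}}$ is positive definite, hence invertible. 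First I would perform a Schur-complement reduction: multiplying on the left and right by appropriate unipotent (hence invertible, rank-preserving) block-triangular matrices block-diagonalizes $\Sigma_{\mathbf{X},\mathbf{Y}}$, giving
\[
\rank(\Sigma_{\mathbf{X},\mathbf{Y}})=\rank(\Sigma_{\mathbf{C},\mathbf{C}})+\rank\!\big(\Sigma_{\mathbf{A},\mathbf{B}}-\Sigma_{\mathbf{A},\mathbf{C}}\Sigma_{\mathbf{C},\mathbf{C}}^{-1}\Sigma_{\mathbf{C},\mathbf{B}}\big).
\]
The first term equals $|\mathbf{C}|$, and the matrix in the second term is exactly the conditional cross-covariance $\Sigma_{\mathbf{A},\mathbf{B}\mid\mathbf{C}}$ of the Gaussian vector. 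Thus $\rank(\Sigma_{\mathbf{A}\cup\mathbf{C},\mathbf{B}\cup\mathbf{C}})=|\mathbf{C}|$ holds if and only if $\Sigma_{\mathbf{A},\mathbf{B}\mid\mathbf{C}}=\mathbf{0}$.

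The second step is to translate the vanishing of the conditional cross-covariance into a conditional-independence statement. For jointly Gaussian variables, $\Sigma_{\mathbf{A},\mathbf{B}\mid\mathbf{C}}=\mathbf{0}$ is equivalent to $\mathbf{A}\indep\mathbf{B}\mid\mathbf{C}$, because the conditional law of $(\mathbf{A},\mathbf{B})$ given $\mathbf{C}$ is Gaussian with covariance given by the Schur complement, and a Gaussian has independent blocks iff its cross-covariance block vanishes. Composing the two steps yields $\rank(\Sigma_{\mathbf{A}\cup\mathbf{C},\mathbf{B}\cup\mathbf{C}})=|\mathbf{C}|\iff\mathbf{A}\indep\mathbf{B}\mid\mathbf{C}$. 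Finally, applying Theorem~\ref{theorem:CI and d-sep} to the disjoint sets $\mathbf{A},\mathbf{B},\mathbf{C}$ converts the conditional independence into the d-separation statement, completing both directions.

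The main obstacle is the role of the faithfulness-type assumption, and it enters precisely in the ``if'' direction through the last step. The Markov direction is automatic: in a linear Gaussian model, d-separation always implies the corresponding conditional independence (and hence the rank collapse), with no cancellation possible. The reverse direction---that a genuine connecting trek between $\mathbf{A}$ and $\mathbf{B}$ not blocked by $\mathbf{C}$ forces $\rank>|\mathbf{C}|$---can fail under adversarial parameter cancellations, which is exactly what rank faithfulness rules out; one should verify that rank faithfulness, restricted to the submatrix $\Sigma_{\mathbf{A}\cup\mathbf{C},\mathbf{B}\cup\mathbf{C}}$, supplies the faithfulness needed by Theorem~\ref{theorem:CI and d-sep}. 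An alternative, more self-contained route would bypass conditional independence and argue through the trek-separation theorem \citep{sullivant2010trek}: one shows that the minimum-size t-separator of $\mathbf{A}\cup\mathbf{C}$ from $\mathbf{B}\cup\mathbf{C}$ has size $|\mathbf{C}|$ iff $\mathbf{C}$ d-separates $\mathbf{A}$ from $\mathbf{B}$ (the $|\mathbf{C}|$ lower bound coming from the trivial self-treks at each vertex of $\mathbf{C}$), and then invokes rank faithfulness to equate the generic rank with this minimum separator size. This route has the advantage of handling latent confounders natively, but for disjoint observed sets the Schur-complement argument above is the most direct.
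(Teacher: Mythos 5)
The paper does not actually prove this statement: Theorem~\ref{thm: rank and d-sep} is quoted as a preliminary from \citet{dong2023versatile}, so there is no in-paper proof to compare against. Your argument itself is sound and is the standard way to see the equivalence: the Guttman/Schur rank identity $\rank(\Sigma_{\mathbf{A}\cup\mathbf{C},\mathbf{B}\cup\mathbf{C}})=\rank(\Sigma_{\mathbf{C},\mathbf{C}})+\rank(\Sigma_{\mathbf{A},\mathbf{B}}-\Sigma_{\mathbf{A},\mathbf{C}}\Sigma_{\mathbf{C},\mathbf{C}}^{-1}\Sigma_{\mathbf{C},\mathbf{B}})$ is valid once $\Sigma_{\mathbf{C},\mathbf{C}}$ is nonsingular (you should state non-degeneracy of the observed covariance explicitly, since otherwise the rank could even fall below $|\mathbf{C}|$), and for jointly Gaussian variables the vanishing of the Schur complement is exactly $\mathbf{A}\indep\mathbf{B}\mid\mathbf{C}$.

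The one step you flag but do not close is the real content of the theorem: Theorem~\ref{theorem:CI and d-sep} assumes \emph{CI-faithfulness}, whereas the hypothesis here is \emph{rank faithfulness}. These are not interchangeable by fiat. Rank faithfulness says $\rank(\Sigma_{\mathbf{X},\mathbf{Y}})$ attains the graph-generic value, which by Theorem~\ref{theorem:rank and t} equals the minimal t-separator size; to conclude d-separation from $\rank(\Sigma_{\mathbf{A}\cup\mathbf{C},\mathbf{B}\cup\mathbf{C}})=|\mathbf{C}|$ you still need the purely graphical lemma that the minimal t-separator of $\mathbf{A}\cup\mathbf{C}$ from $\mathbf{B}\cup\mathbf{C}$ has size $|\mathbf{C}|$ if and only if $\mathbf{C}$ d-separates $\mathbf{A}$ and $\mathbf{B}$ (the lower bound $|\mathbf{C}|$ coming from the self-treks at vertices of $\mathbf{C}$, the upper bound from blocking every remaining trek). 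That is precisely the ``alternative route'' you sketch in your last sentences, and it is how the cited source actually derives the result; it also handles the case where $\mathcal{G}$ contains latent variables without having to worry about whether the observed marginal is CI-faithful. So either adopt the trek-separation route outright, or add the short argument that rank faithfulness implies vanishing-partial-correlation faithfulness for the Gaussian marginal, after which your Schur-complement reduction plus Theorem~\ref{theorem:CI and d-sep} completes the proof.
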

    
      The above
      Theorem~\ref{thm: rank and d-sep} says that d-separations
      can also be inferred from 
       rank of a cross-covariance matrix,
       and thus for causal discovery of linear causal models,
        partial correlation test / linear CI test 
        can be substituted by rank test.
    \looseness=-1

    \textbf{Rank Relates to T-sep that Indicates Latent Variables}
    
    Next, we show that rank of cross-covariance informs something beyond d-separations.
    Specifically,  
    t-separations \citep{sullivant2010trek} can be inferred from rank, 
    and t-separations can be used to identify latent variables.
    The relation between rank and t-separations is given as follows.
    \looseness=-1
    
     \begin{theorem} [Rank and T-separation \citep{sullivant2010trek}]
       Given two sets of variables $\srvec{A}$ and $\srvec{B}$ from a linear model with graph $\mathcal{G}$ and assume rank faithfulness.
       We have:
       \begin{equation}
       \begin{aligned}
        \label{eq:rank and trek}
       \text{rank}(\Sigma_{\srvec{A},{\srvec{B}}}) = 
       \min \{|\srvec{C}_{\srvec{A}}|+
       |\srvec{C}_{\srvec{B}}|:(\srvec{C}_{\srvec{A}},\srvec{C}_{\srvec{B}})~ \\
       \text{t-separates}
       ~\srvec{A}~\text{from}~\srvec{B}~\text{in}~\mathcal{G}\},
       \end{aligned}
       \end{equation}
       \vspace{-2mm}
       where $\Sigma_{\srvec{A},{\srvec{B}}}$ 
     is the cross-covariance over $\srvec{A}$ and $\srvec{B}$.
     \label{theorem:rank and t}
     \vspace{-0.5mm}
     \end{theorem}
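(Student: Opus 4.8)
The plan is to reduce the identity to a purely combinatorial statement about the graph by exploiting the algebraic structure that a linear causal model imposes on $\Sigma_{\srvec{A},\srvec{B}}$. First I would write the model as $\srvec{V} = \Lambda^{\top}\srvec{V} + \srvec{\varepsilon}$, so that $\Sigma = (I-\Lambda)^{-\top}\Omega(I-\Lambda)^{-1}$ with $\Omega$ diagonal. Expanding $(I-\Lambda)^{-1} = \sum_{k\ge 0}\Lambda^{k}$ and collecting terms yields Wright's trek rule: each entry $\Sigma_{a,b}$ equals the sum, over all treks (pairs of directed paths sharing a common top and ending at $a$ and $b$ respectively), of the product of the edge coefficients along the trek times the variance of its source. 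The upshot is a factorization $\Sigma_{\srvec{A},\srvec{B}} = P_{\srvec{A}}^{\top}\,\Omega\,P_{\srvec{B}}$, where the rows of $P_{\srvec{A}}$ (resp.\ $P_{\srvec{B}}$) are indexed by the potential trek-tops and record the total path weight from each top down to the nodes of $\srvec{A}$ (resp.\ $\srvec{B}$); concretely $P_{\srvec{A}} = [(I-\Lambda)^{-1}]_{\cdot,\srvec{A}}$.

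Second, I would compute the minors of $\Sigma_{\srvec{A},\srvec{B}}$ with the Cauchy--Binet formula together with the Lindstr\"om--Gessel--Viennot (LGV) lemma. Cauchy--Binet turns a $k\times k$ minor into a sum of products of $k\times k$ minors of $P_{\srvec{A}}$ and $P_{\srvec{B}}$, and LGV identifies each such minor with a signed sum over systems of vertex-disjoint directed paths from a chosen set of tops to the corresponding $k$-subset of $\srvec{A}$ (resp.\ $\srvec{B}$). Gluing the two path systems at their shared tops shows that a given $k\times k$ minor of $\Sigma_{\srvec{A},\srvec{B}}$ is, as a polynomial in the model parameters, a signed sum of weights of systems of $k$ vertex-disjoint treks from $\srvec{A}$ to $\srvec{B}$. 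The crucial point is that whenever such a disjoint trek system exists, its monomial cannot be cancelled by any other term, so the minor is a nonzero polynomial; hence the generic rank of $\Sigma_{\srvec{A},\srvec{B}}$ equals the maximum number of vertex-disjoint treks one can route from $\srvec{A}$ to $\srvec{B}$.

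Third, I would close the gap between this maximum-flow quantity and the minimum trek-separator size by a Menger-type max-flow--min-cut argument on a split-vertex graph (each node duplicated into an ``in'' and ``out'' copy so that vertex-disjointness of paths becomes edge-disjointness, with the up-part and down-part of a trek handled separately). This yields $\max\{\text{disjoint treks}\} = \min\{|\srvec{C}_{\srvec{A}}| + |\srvec{C}_{\srvec{B}}|\}$ over trek separators $(\srvec{C}_{\srvec{A}},\srvec{C}_{\srvec{B}})$. Finally, rank faithfulness guarantees that the rank at the actual parameter values coincides with the generic rank, so combining the three steps gives Eq.~\eqref{eq:rank and trek}.

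The hard part will be the second step: arranging the path matrices so that Cauchy--Binet and LGV apply cleanly, and especially verifying the no-cancellation claim --- that the monomial coming from a maximal disjoint trek system survives in the minor rather than being killed by sign cancellations among overlapping trek systems. By contrast, the min-max step is standard once the graph is correctly split, and the passage from generic to actual rank is immediate from faithfulness.
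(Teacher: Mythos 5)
This theorem is not proved in the paper at all: it is imported verbatim from \citet{sullivant2010trek}, so there is no in-paper argument to compare against. Your sketch essentially reconstructs the proof in that reference (trek rule from $\Sigma=(I-\Lambda)^{-\top}\Omega(I-\Lambda)^{-1}$, Cauchy--Binet over the trek tops, Lindstr\"om--Gessel--Viennot on each side, a Menger-type min-cut argument, and rank faithfulness to pass from generic rank to actual rank), and the overall architecture is the right one.

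There is, however, one genuine imprecision that would break the argument as literally stated: the combinatorial invariant is not ``maximum number of vertex-disjoint treks.'' When you apply Cauchy--Binet and LGV as in your second step, the surviving terms are indexed by pairs consisting of a vertex-disjoint path system from a top set $S$ to the chosen subset of $\srvec{A}$ and a (separately) vertex-disjoint path system from the same $S$ to the chosen subset of $\srvec{B}$; an $\srvec{A}$-side path of one trek is perfectly free to cross a $\srvec{B}$-side path of another. The correct notion is therefore a trek system with \emph{no sided intersection} (Definition 2.6 and Theorem 2.8 of Sullivant et al.), which is strictly weaker than full vertex-disjointness, and the max-flow quantity that equals the minimum trek-separator size in your third step is the maximum size of such a side-wise disjoint system, not of a vertex-disjoint one. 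Your split-vertex flow network, if built with separate ``up'' and ``down'' layers as you hint, computes exactly the side-wise quantity, so the fix is to carry that notion consistently through steps two and three rather than ``vertex-disjoint treks.'' The no-cancellation worry you flag is handled the standard way: distinct top sets $S$ contribute distinct $\omega$-monomials, and within a fixed $S$ one specializes the edge weights to isolate a single disjoint path system, so the minor is a nonzero polynomial whenever one side-wise disjoint trek system of the right size exists.
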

     
     The left-hand side of Equation~\ref{eq:rank and trek}
     is about  properties of the observational distribution,
     while the right-hand side describes  properties of the graph. 
     An example highlighting the greater informativeness of rank compared to CI is as follows. 
     Consider the graph $\mathcal{G}$ in Figure~\ref{fig:trek_example},
     where 
     $\{\srv{X}_1,\srv{X}_2\}$ 
      and $\{\srv{X}_3,\srv{X}_4\}$ are d-separated by $\srv{L}_1$,
      but we can never infer that from any CI test,
      i.e., we can never check whether $\{\srv{X}_1,\srv{X}_2\} \indep \{\srv{X}_3,\srv{X}_4\}|\srv{L}_1$ holds,
      as $\srv{L}_1$ is not observed.
     In contrast, using rank information, we can 
     infer that  $\text{rank}(\Sigma_{\{\srv{X}_1\srv{X}_2\},\{\srv{\srv{X}_3\srv{X}_4}\}})=1$, which 
     implies $\{\srv{X}_1,\srv{X}_2\}$ and $\{\srv{X}_3,\srv{X}_4\}$ are t-separated 
     by one latent variable. 
     The rationale behind is that the t-separation of two set of variables $\srvec{A}$, $\srvec{B}$ by $(\srvec{C}_\srvec{A},
     \srvec{C}_\srvec{B})$ can be inferred through rank,
     without actually observing any element in $(\srvec{C}_\srvec{A},\srvec{C}_\srvec{B})$.
    A more detailed discussion can be found in \citep{dong2023versatile}.
    \looseness=-1

    \textbf{Discretization is Ubiquitous and Needs to be Handled}
    
    Discretization is ubiquitous in many scientific fields.
    For instance, it is common to come
    across concepts that cannot be measured directly, such as depression, anxiety, attitude,
    and the observations of such variables
     are often the result of coarse-grained measurement of the
    underlying continuous ones.
    %
    %For example, a dichotomous variable is observed as
    %$\srv{X}=1$ when the underlying variable exceeds some threshold value,
    % and as $\srv{X}=0$ otherwise. 
     %
    More examples can be found in fields like psychology
    \citep{lord2008statistical}, biometrics \citep{finney1952probit} and econometrics \citep{nerlove1973univariate},
    where it is widely accepted to assume a continuous variable underlies a dichotomous or polychotomous observed one. 
    \looseness=-1
    %considered very resonable.
    
    In the context of rank test, what should we do to deal with such a ubiquitous discretization problem? 
    One naive way is to just treat these ordinal values
     as continuous ones and 
     test the rank of a cross-covariance matrix as usual,
     and yet it cannot work.
     The reason lies in that the observed values of these discretized variables just represent the ordering
     and the values can be rather arbitrary.
      For example, assume that the original continuous observations are discretized 
      into three levels represented by $\{1,2,3\}$ respectively; 
      one can alternatively uses $\{1,2,2.1\}$ or $\{1,2,10^{16}\}$ to represent the three levels.
      If we directly use the ordinal values,
       the resulting cross-covariance matrix can be very different from the ground truth one, 
       leading to meaningless results.
     An example can be found in Figure~\ref{fig:1},
     where (a) shows the population cross-covariance
    and (b) shows the counterpart calculated by using discretized observations.
     Even with infinite samples, the two matrices are totally different,
     and the rank of the matrix in (a) is 1 while rank of that in (b) is 3.
    Next, we will show that, even if we can use maximum likelihood to estimate the correlation first, 
     the problem is still highly non-trivial.

    \begin{figure*}[tb]
      \vspace{-0mm}
    \setlength{\belowcaptionskip}{0mm}
      %\centering
      \subfloat[Population cross-covariance matrix over continuous variables.]
        {
        \centering
        \includegraphics[width=0.32\textwidth]{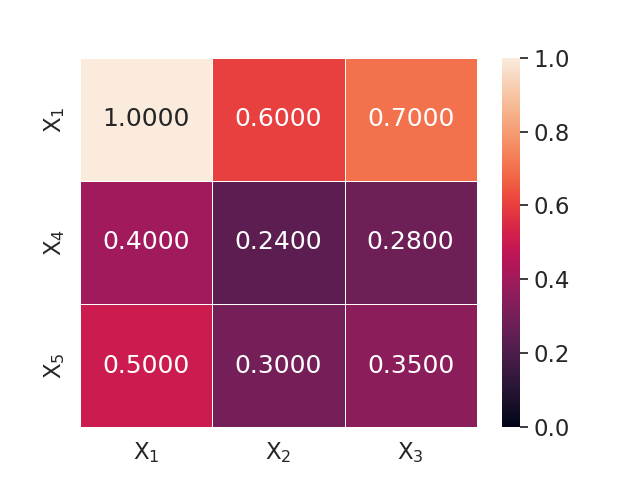}
        }
        \centering
    \subfloat[Cross-covariance matrix using discretized data
    with $N\to \infty$.]
    {
      \centering
      \includegraphics[width=0.32\textwidth]{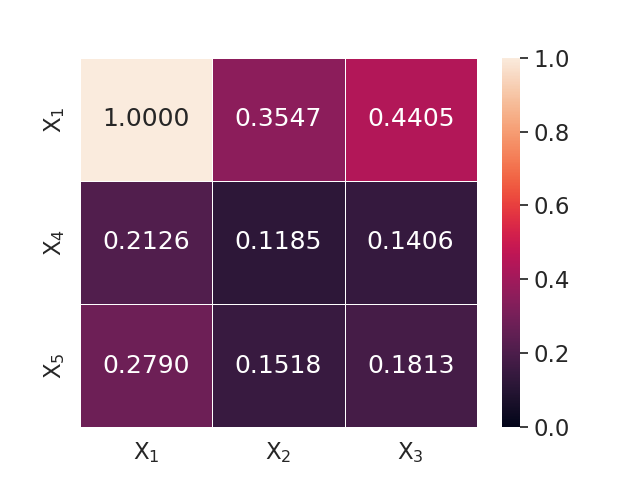}
    }%\\[-2em]
    %\hfill
    \subfloat[Distributions of p-values of CCART-C and CCART-DE.]%{0.32\textwidth}
    {
    %\vspace{10mm}
      \centering
      \includegraphics[width=0.30\textwidth]{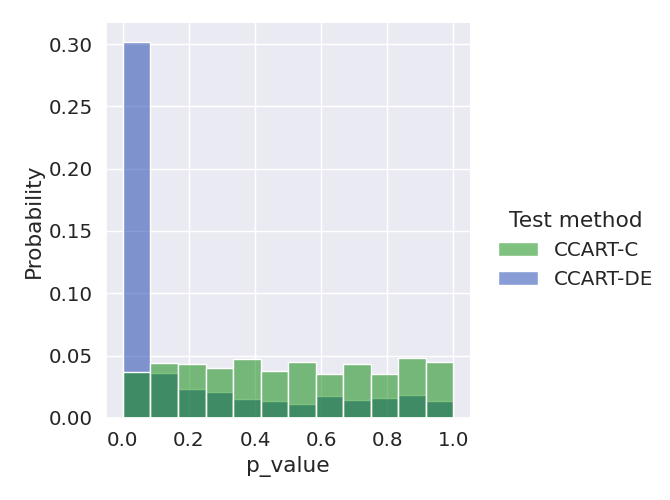}
      %\vspace{100em}
    }%\\[-2ex]
    \vspace{-2mm}
    \caption{Subfigures (a) and (b) together show we cannot directly take the discrete values for
     the calculation of rank of the covariance.
     %: even with infinite samples, rank of the matrix in (a) is 1 while rank of that in (b) is 3.
    Subfigure (c) shows that directly plugging an estimated cross-covariance 
    into a rank test does not work as Type I cannot be controlled.
     }
      \label{fig:1}
      \vspace{-1mm}
    \end{figure*}

    \subsection{Classical Rank Test with Estimated Correlation}
    We have shown that the naive solution of 
    directly using the ordinal values cannot work.
    Thus, one may wonder another straightforward one - 
    estimate the correlations first (which can be done by maximizing likelihood, detailed in Section~\ref{sec:corr_est}),
    and then plug the estimated correlations into a standard CCA rank test.
    In this section we will show that this straightforward solution cannot work either; more specifically,
    the Type-I errors cannot be effectively controlled.
    \looseness=-1
    
    We start with a brief introduction to the classical rank test, which is based on
     Canonical Correlation Analysis (CCA) \citep{jordan1875essai,hotelling1992relations}.
    The key design of a test typically is to find a suitable statistic
    and to derive its distribution under the null hypothesis. 
    As for rank test of cross-covariance $\Sigma_{\srvec{X},\srvec{Y}}$,  
    %ikelihood ratio based statistic is designed
      statistics based on CCA scores between $\srvec{X}$ and $\srvec{Y}$ are found to be very effective.
    For $|\srvec{X}|=P$, $|\srvec{Y}|=Q$, and $K=\min(P,Q)$, the CCA problem is as follows:
    \looseness=-1
    \begin{equation}
    \begin{aligned}
      \label{eq:cca}
    \max_{\smat{A}\in\mathbb{R}^{P\times K} ,\smat{B}\in\mathbb{R}^{Q\times K}} \operatorname{tr}(\smat{A}^T\hat{\Sigma}_{\srvec{X},\srvec{Y}}\smat{B}),\\
    ~\text{s.t.,}~
    \smat{A}^T\hat{\Sigma}_{\srvec{X}}\smat{A}=\smat{B}^T\hat{\Sigma}_{\srvec{Y}}\smat{B}=\smat{I}.
    \end{aligned}
    \end{equation}
    Assume that the solution to Eq.~\ref{eq:cca} leads to CCA scores between $\srvec{X}$ and $\srvec{Y}$
    as
    $\{r_{i}\}_{i=1}^{K}$.
    With the null hypothesis that $\text{rank}({\Sigma}_{\srvec{X},\srvec{Y}})\leq k$,
     referred to as $\mathcal{H}^{k}_{0}$, 
     we would expect that 
    the top-$k$ CCA scores are non-zero and 
    the rest ones  are all zero.
    This leads to a likelihood-ratio-based test statistics \citep{ranktest} under 
    $\mathcal{H}^{k}_{0}$ as follows.
    \begin{align} 
      \label{eq:statistic}
    \lambda_{k}=-\left(N-\frac{P+Q+3}{2}\right)\ln(\Pi_{i=k+1}^{K}(1-r_{i}^2)),
    \end{align}
    which has been shown to approximately follow a chi-square distribution with degree of freedom $(P-k+1)(Q-k+1)$.
    %Thus, 
    To perform the rank test, one only has to calculate $\lambda_{k}$ and the related chi-square distribution
    to get the p-value.
    %to calculate the p-value.

    In Eq~\ref{eq:cca}, $\hat{\Sigma}_{\srvec{X},\srvec{Y}}$ refers to the sample covariance
    $\frac{{\smat{D}^{\srvec{X}}}^T\smat{D}^{\srvec{Y}}}{N-1}$.
    In the presence of discretization, we only have access to $\tilde{\smat{D}}^{\srvec{X}}$
    and $\tilde{\smat{D}}^{\srvec{Y}}$,
    but we can still estimate the cross-correlation by 
    maximizing the likelihood (detailed in Section~\ref{sec:corr_est}),
    and take the estimation into Eq.~\ref{eq:cca} to calculate
    the CCA scores and thus the test statistics.
    However, due to the information loss introduced by discretization and the additional maximum likelihood steps,
    the distribution of the statistics is changed to a considerable extent.
    An example is shown in Figure~\ref{fig:1} (c),
    where CCART-C refers to CCA rank test using the original continuous observations and 
    CCART-DE refers to first estimating the correlations by maximum likelihood using discrete data and then plugging it into
    the CCA rank test.
    As shown, the p-values of  CCART-C are uniformly distributed while the p-values of CCART-DE are clearly not;
    most of them are near to zero and thus the test tends to reject everything, leading to 
     unacceptably large Type I errors (also validated in 
    Section~\ref{exp:type1type2} and Figure~\ref{fig:mixed_type1_type2}).

    Ideally, we would expect to derive the updated distribution of the statistics,
    and yet the involved likelihood maximization steps make it very difficult.
    Therefore, we aim to solve this problem by  estimating the empirical 
    cdf of the null distribution using permutations, detailed in what follows.

    \vspace{-1em}
    \section{Mixed Data Permutation-based Rank Test}\label{sec:method}
    \vspace{0em}
    In this section, we propose MPRT.  A brief introduction to permutation test can be found in \cref{sec:appendix_intro_perm_test}.
    We start with the all continuous case.
    
    \vspace{-0.5em}
    \subsection{All Continuous Case}
    %e begin with the case where we have observations of continuous variables.
    %The key is to build interchangability such that we treat permuted data as if they are resampled from the same distribution,
    %and thus we can have an empirical distribution of the statistic that we care about.
    %
    
    Assume that we are interested in the rank of $\Sigma_{\srvec{X},\srvec{Y}}$, where
    $|\srvec{X}|=P$ and $|\srvec{Y}|=Q$ and their corresponding data matrices are 
    $\tilde{\smat{D}}^{\srvec{X}}\in \mathbb{R}^{N\times P}$ 
    and $\tilde{\smat{D}}^{\srvec{Y}}\in \mathbb{R}^{N\times Q}$ respectively.
    The first crucial step is to 
    solve the CCA problem defined in Eq~\ref{eq:cca},
    by Singular Value Decomposition (SVD) as follows.
    %$\smat{A}\in\mathbb{R}^{P\times K}$ and $\smat{B}\in\mathbb{R}^{Q\times K}$.
    %
    %and this the canonical variables
    %$\srvec{C_{X}}=\smat{A}^T\srvec{X}$ and $\srvec{C_{Y}}=\smat{B^T}\srvec{Y}$,
    % where $\hat{\Sigma}_{\srvec{C_{X}}}=\hat{\Sigma}_{\srvec{C_{Y}}}=\smat{I}$, and $\hat{\Sigma}_{\srvec{C_{X}},\srvec{C_{Y}}}$
    %  is a diagonal matrix.
    \vspace{-1mm}
    \begin{equation}
     \begin{aligned}
     \label{eq:AB_est_1}
     &\smat{U}\smat{S}\smat{V}  =
      {\hat{\Sigma}_{\srvec{X}}^{-\frac{1}{2}}}
      \hat{\Sigma}_{\srvec{X},\srvec{Y}}{\hat{\Sigma}_{\srvec{Y}}^{-\frac{1}{2}}},  \\
     &\smat{A}={\hat{\Sigma}_{\srvec{X}}^{-\frac{1}{2}T}}\smat{U}~\text{and}~
      \smat{B}={\hat{\Sigma}_{\srvec{Y}}^{-\frac{1}{2}T}}\smat{V}^{T},
     \end{aligned}
     \end{equation}
     \vspace{-0mm}
    %where Eq.~\ref{eq:AB_est_1} is solved by .
    where $\smat{A}$ and $\smat{B}$ are two linear projection matrices
    and the two CCA variables are $\srvec{C_{X}}=\smat{A}^T\srvec{X}$ and $\srvec{C_{Y}}=\smat{B^T}\srvec{Y}$.
    $\srvec{C_{X}}$  and $\srvec{C_{Y}}$ have two good properties: 
    (i) $\hat{\Sigma}_{\srvec{C_{X}}}=\hat{\Sigma}_{\srvec{C_{Y}}}=\smat{I}$, and $\hat{\Sigma}_{\srvec{C_{X}},\srvec{C_{Y}}}$
    is a diagonal matrix; (ii)
     under null hypothesis $\mathcal{H}^{k}_{0}: \text{rank}(\Sigma_{\srvec{X},\srvec{Y}})\leq k$,
    only the top-k diagonal entries of
     ${\Sigma}_{\srvec{C_X},\srvec{C_Y}}$ are nonzero and the rest of the diagonal entries should be zero.
    Taking these two into consideration, 
    we have the exchangeability between  $\srvec{C_X}_{k:}$ and $\srvec{C_Y}_{k:}$,
    which is formalized in the following Theorem~\ref{thm:exchangeability} (proof of which can be found in Appendix).
    %we have that $\srvec{C_X}_{k:}$ and $\srvec{C_Y}_{k:}$ are mutually independent.
    %In other words, we can permute the data matrix of $\srvec{C_X}_{k:}$ and $\srvec{C_Y}_{k:}$ 
    %in order to get resampling of $\srvec{C_X}_{k:}$ and $\srvec{C_Y}_{k:}$.
    \begin{restatable}[Exchangeability of $\srvec{C_X}_{k:}$ and $\srvec{C_Y}_{k:}$]{theorem}{exchangeability}
      \label{thm:exchangeability}
      Given a set of variables $\srvec{V}$ that are jointly gaussian, 
      under null hypothesis $\mathcal{H}^{k}_{0}: \text{rank}(\Sigma_{\srvec{X},\srvec{Y}})\leq k$,
      where $\srvec{X},\srvec{Y}\subseteq\srvec{V}$, 
    random vectors $\srvec{C_X}_{k:}$ and $\srvec{C_Y}_{k:}$ are asymptotically independent with each other.
    \vspace{-1mm}
    \end{restatable}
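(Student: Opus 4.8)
The plan is to reduce the assertion to the elementary fact that jointly Gaussian vectors with vanishing cross-covariance are independent, to establish this exactly at the population level, and then to obtain the stated \emph{asymptotic} conclusion by letting the sample-based projections converge to their population counterparts.

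First I would pass to the population CCA. Let $\smat{A}^{*}$ and $\smat{B}^{*}$ be the projection matrices produced by the decomposition in Eq.~\ref{eq:AB_est_1} when the \emph{true} covariances $\Sigma_{\srvec{X}}$, $\Sigma_{\srvec{Y}}$, $\Sigma_{\srvec{X},\srvec{Y}}$ replace their sample estimates, and set $\srvec{C_X}^{*}=\smat{A}^{*T}\srvec{X}$ and $\srvec{C_Y}^{*}=\smat{B}^{*T}\srvec{Y}$. Because $\srvec{X},\srvec{Y}\subseteq\srvec{V}$ are jointly Gaussian and $\smat{A}^{*},\smat{B}^{*}$ are deterministic, the stacked vector $(\srvec{C_X}^{*},\srvec{C_Y}^{*})$ is again jointly Gaussian; moreover the population analogue of property (i) holds \emph{exactly}, i.e.\ $\Sigma_{\srvec{C_X}^{*}}=\Sigma_{\srvec{C_Y}^{*}}=\smat{I}$ and $\Sigma_{\srvec{C_X}^{*},\srvec{C_Y}^{*}}$ is diagonal with the population canonical correlations on its diagonal.

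Next I would invoke the null hypothesis. Under $\mathcal{H}^{k}_{0}$ we have $\text{rank}(\Sigma_{\srvec{X},\srvec{Y}})\leq k$, which forces all but at most $k$ of the canonical correlations to vanish, so the tail block of $\Sigma_{\srvec{C_X}^{*},\srvec{C_Y}^{*}}$ that pairs $\srvec{C_X}^{*}_{k:}$ with $\srvec{C_Y}^{*}_{k:}$ is the zero matrix. Since $(\srvec{C_X}^{*}_{k:},\srvec{C_Y}^{*}_{k:})$ is jointly Gaussian with zero cross-covariance, the two tail sub-vectors are \emph{exactly} independent, which is the population version of the claim.

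Finally I would transfer this to the estimated canonical variables and collect the asymptotics. The sample projections $\smat{A},\smat{B}$ are continuous functions---through the whitening maps $\hat{\Sigma}^{-1/2}_{\cdot}$ and the SVD---of the sample covariances, which are consistent for the population covariances; hence $(\srvec{C_X}_{k:},\srvec{C_Y}_{k:})$ converges to $(\srvec{C_X}^{*}_{k:},\srvec{C_Y}^{*}_{k:})$ and the cross-covariance of the sample tail blocks tends to zero, giving the asserted asymptotic independence. I expect the genuine obstacle to be exactly this last step: the projections are learned from the \emph{same} data that provide the observations, so in finite samples the projected variables are neither exactly Gaussian nor exactly uncorrelated, and one must argue---via perturbation bounds on the SVD together with the delta method---that both the residual cross-covariance and the non-Gaussian correction of the tail blocks are asymptotically negligible. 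A secondary subtlety is that when the $K-k$ trailing canonical correlations coincide (here they are all zero) the directions $\smat{A}^{*}_{k:},\smat{B}^{*}_{k:}$ are identified only up to an orthogonal rotation; this must be handled by noting that independence of the tail blocks is invariant under such rotations, so the limiting statement is well defined regardless of the particular measurable choice of canonical directions.
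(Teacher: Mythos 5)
Your proposal follows essentially the same route as the paper's proof: pass to the population canonical variables, note that under $\mathcal{H}^{k}_{0}$ the trailing canonical correlations vanish so the tail blocks are jointly Gaussian with zero cross-covariance and hence exactly independent, and then transfer to the sample-based projections via consistency of the covariance estimates. The paper's own proof is a four-sentence version of this same argument; the subtleties you flag at the end (the data-dependence of $\smat{A},\smat{B}$ and the rotational non-identifiability of the trailing canonical directions) are genuine but are left unaddressed in the paper as well.
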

    Based on the exchangeability between $\srvec{C_X}_{k:}$ and $\srvec{C_Y}_{k:}$,
    we can permute the data matrix of $\srvec{C_X}_{k:}$ and $\srvec{C_Y}_{k:}$ 
    in order to get resampling of $\srvec{C_X}_{k:}$ and $\srvec{C_Y}_{k:}$.
    Specifically, given a random permutation matrix $\smat{P}$,
     $\smat{P}\tilde{\smat{D}}^{\srvec{C_{X}}}_{:,k:}$ and $\tilde{\smat{D}}^{\srvec{C_{Y}}}_{:,k:}$
    together serve as $N$ i.i.d. resamplings from the joint distribution of $\srvec{C_X}_{k:}$ and $\srvec{C_Y}_{k:}$.
    Further,
    the statistics in Eq.~\ref{eq:statistic} only depends on the $k$-th to $K$-th
    CCA scores between $\srvec{X}$ and $\srvec{Y}$,
    which can be equivalently calculated by the first to $(K-k)$-th CCA scores between 
    $\srvec{C_X}_{k:}$ and $\srvec{C_Y}_{k:}$, formally 
    captured by the following Lemma~\ref{lemma:alternative_way_statistic}.
    
    \begin{restatable}[Alternative Way to Calculate Statistic in Eq.~\ref{eq:statistic}]{lemma}{alternativewaystatistic}
      \label{lemma:alternative_way_statistic}
    Let 
    %the CCA score between $\srvec{X}$ and $\srvec{Y}$ are $\{r_i\}_1^K$ 
    the CCA score between $\srvec{C_X}_{k:}$ and $\srvec{C_Y}_{k:}$ be $\{\hat{r}_i\}_1^{K-k}$.
    The statistic defined in Eq.~\ref{eq:statistic} can also be formulated as:
    \vspace{-1mm}
    \begin{align} 
      \label{eq:alternative_way_statistic}
      \lambda_{k} = -\left(N-\frac{P+Q+3}{2}\right)\ln(\Pi_{i=1}^{K-k}(1-\hat{r}_{i}^2)).
    \end{align} \vspace{-5mm}
    \end{restatable}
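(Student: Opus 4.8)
The plan is to prove \cref{lemma:alternative_way_statistic} as an \emph{exact, finite-sample} algebraic identity rather than an asymptotic statement: both expressions for $\lambda_k$ are deterministic functions of the sample covariances, so it suffices to show that the $K-k$ canonical correlations $\{\hat{r}_i\}_{i=1}^{K-k}$ between $\srvec{C_X}_{k:}$ and $\srvec{C_Y}_{k:}$ coincide, as an ordered list, with the trailing original scores, i.e.\ $\hat{r}_i = r_{k+i}$ for each $i$. Once this is established, substituting into Eq.~\ref{eq:alternative_way_statistic} and reindexing the product immediately gives $\Pi_{i=1}^{K-k}(1-\hat{r}_i^2)=\Pi_{i=k+1}^{K}(1-r_i^2)$, so the two formulas for $\lambda_k$ agree term by term (the prefactor $N-\tfrac{P+Q+3}{2}$ is identical in both).

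First I would record the covariance structure of the canonical variables produced by the SVD in Eq.~\ref{eq:AB_est_1}. By property (i), the stacked canonical variables satisfy $\hat{\Sigma}_{\srvec{C_X}}=\smat{I}_P$, $\hat{\Sigma}_{\srvec{C_Y}}=\smat{I}_Q$, and $\hat{\Sigma}_{\srvec{C_X},\srvec{C_Y}}=\smat{S}$, where $\smat{S}\in\mathbb{R}^{P\times Q}$ is generalized-diagonal with entries $r_1\ge r_2\ge\cdots\ge r_K\ge 0$ on its main diagonal. The essential observation is that selecting the trailing coordinates is compatible with this block structure: the within-set sample covariances of $\srvec{C_X}_{k:}$ and $\srvec{C_Y}_{k:}$ are the corresponding principal submatrices of the identity, hence $\hat{\Sigma}_{\srvec{C_X}_{k:}}=\smat{I}_{P-k}$ and $\hat{\Sigma}_{\srvec{C_Y}_{k:}}=\smat{I}_{Q-k}$, while the cross-covariance $\hat{\Sigma}_{\srvec{C_X}_{k:},\srvec{C_Y}_{k:}}$ is the lower-right $(P-k)\times(Q-k)$ block of $\smat{S}$, which is again diagonal with entries $r_{k+1},\dots,r_K$.

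Next I would solve the CCA for the sub-vectors using the very recipe of Eq.~\ref{eq:AB_est_1}. Because the two within-set covariances are already identities, the whitening matrices $\hat{\Sigma}_{\srvec{C_X}_{k:}}^{-1/2}$ and $\hat{\Sigma}_{\srvec{C_Y}_{k:}}^{-1/2}$ are themselves identities, so the matrix whose singular values are the canonical correlations is exactly $\hat{\Sigma}_{\srvec{C_X}_{k:},\srvec{C_Y}_{k:}}$, a diagonal matrix with entries $r_{k+1},\dots,r_K$. The singular values of a (rectangular) diagonal matrix with non-negative, already-ordered diagonal are precisely those diagonal entries, which yields $\hat{r}_i=r_{k+i}$ and completes the core step. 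Equivalently, one could invoke the standard invariance of canonical correlations under separate invertible linear transformations of each set, together with the fact that the canonical variables decouple into independently correlated pairs, so discarding the leading $k$ pairs leaves a CCA problem whose solution is read off directly.

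I expect the only real care to be needed in the bookkeeping rather than in any deep idea. Since $\smat{S}$ is rectangular when $P\ne Q$, I must be explicit that ``$k:$'' selects coordinates $k+1,\dots,P$ (resp.\ $k+1,\dots,Q$), that the trailing block carries exactly $\min(P-k,Q-k)=K-k$ diagonal entries, and that the CCA/SVD sign and ordering conventions make the scores non-negative and non-increasing, so that $\hat{r}_i=r_{k+i}$ holds coordinatewise rather than merely up to a permutation. No asymptotics and no appeal to the null hypothesis $\mathcal{H}^{k}_{0}$ are required here: the identity holds for every sample with invertible $\hat{\Sigma}_{\srvec{X}}$ and $\hat{\Sigma}_{\srvec{Y}}$, which is what makes this lemma the clean bridge between the original statistic and the permutation scheme built on $\srvec{C_X}_{k:}$ and $\srvec{C_Y}_{k:}$.
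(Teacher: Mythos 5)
Your proposal is correct and follows essentially the same route as the paper's own proof: both reduce the claim to the identity $\hat{r}_i = r_{k+i}$ by observing that, after canonical transformation, the within-set covariances of $\srvec{C_X}_{k:}$ and $\srvec{C_Y}_{k:}$ are identities and their cross-covariance is the trailing diagonal block with entries $r_{k+1},\dots,r_K$, so the sub-CCA scores are read off directly and the product reindexes. The paper states this in two sentences; you supply the whitening/SVD bookkeeping it leaves implicit, but there is no difference in substance.
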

    
    By Lemma~\ref{lemma:alternative_way_statistic}, we know that the test statistics
    only depends on $\srvec{C_X}_{k:}$ and $\srvec{C_Y}_{k:}$. Further, 
    $\srvec{C_X}_{k:}$ and $\srvec{C_Y}_{k:}$ can be resampled by permutations.
    Taking these two into consideration,
    we can make use of permutation to estimate the empirical CDF of the null distribution,
    and thus correctly calculate the p-value.
    Below we give a detailed description of the procedure to do the permutation and 
    consequently calculate the p-value.
    Given $\smat{A}$ and $\smat{B}$,
    we have the observed data matrix of the two canonical variables as 
    $\tilde{\smat{D}}^{\srvec{C_{X}}}=\tilde{\smat{D}}^{\srvec{X}}\smat{A}$ and 
    $\tilde{\smat{D}}^{\srvec{C_{Y}}}=\tilde{\smat{D}}^{\srvec{Y}}\smat{B}$ 
    (where $\tilde{\smat{D}}^{\srvec{C_{X}}},\tilde{\smat{D}}^{\srvec{C_{Y}}}\in\mathbb{R}^{N\times K}$). 
    For each random $N\times N$ permutation matrix $\smat{P}$,
     we use 
     $\smat{P}\tilde{\smat{D}}^{\srvec{C_{X}}}_{:,k:}$ and $\tilde{\smat{D}}^{\srvec{C_{Y}}}_{:,k:}$ to
      calculate the test statistics under permutation $\smat{P}$
      as  $\lambda_{k}^{\smat{P}}$ following Eq.~\ref{eq:alternative_way_statistic},
      and the p-value is obtained as:
      \begin{align} 
        \label{eq:pvalue}
      p_k=\mathbb{E}~\bm{1}_{[\lambda_{k}^{\smat{P}}\geq\lambda_{k}]}, 
      \end{align}
      where the expectation is taken over random permutations.

    \subsection{Mixed Case - in the Presence of Discretization}
     
    Here we discuss the case where some columns of the data matrices 
    $\tilde{\smat{D}}^{\srvec{X}}$ and $\tilde{\smat{D}}^{\srvec{Y}}$ are discretized.
    Under such a scenario,
     one can still estimate ${\hat{\Sigma}_{\srvec{X}}}$, 
     $\hat{\Sigma}_{\srvec{X},\srvec{Y}}$, and $\hat{\Sigma}_{\srvec{Y}}$ by maximizing likelihood,
     which will be detailed in Section~\ref{sec:corr_est}.
     After that, $\smat{A}$ and $\smat{B}$ can still be estimated following Eq.~\ref{eq:AB_est_1},
     and the exchangeability between 
      $\srvec{C_X}_{k:}$ and $\srvec{C_Y}_{k:}$ still holds.
      \looseness=-1
    
      However, to get the resampling of 
      $\srvec{C_X}_{k:}$ and $\srvec{C_Y}_{k:}$ by permutation,
      one has to apply linear transformation $\smat{A}$ and $\smat{B}$
      to get 
      $\tilde{\smat{D}}^{\srvec{C_{X}}}=\tilde{\smat{D}}^{\srvec{X}}\smat{A}$ and  
       $\tilde{\smat{D}}^{\srvec{C_{Y}}}=\tilde{\smat{D}}^{\srvec{Y}}\smat{B}$,
       respectively.
      In the all continuous case, it is straightforward, but
      in the presence of discretization,
      it makes no sense to apply a linear transformation $\smat{A}$ to 
      $\tilde{\smat{D}}^{\srvec{X}}$, when some columns of $\tilde{\smat{D}}^{\srvec{X}}$ are just ordinal values.
    As a consequence, we cannot make use of Theorem~\ref{thm:exchangeability} to get a resampling of 
    $\srvec{C_X}_{k:}$ and $\srvec{C_Y}_{k:}$ to calculate the statistic $\lambda_k$ and estimate the p-value anymore.
    \looseness=-1
    
    Fortunately, it can be shown that 
    to calculate $\lambda_{k}^{\smat{P}}$,
    one does not have to really get the exact resampling from 
    $\srvec{C_X}_{k:}$ and $\srvec{C_Y}_{k:}$. 
    Instead, for each random permutation $\smat{P}$, we 
    can get a consistent estimation of $\{\hat{r}_i\}_1^{K-k}$
    and consequently calculate $\lambda_{k}^{\smat{P}}$.
    This is formalized by the following Theorem~\ref{thm:statistic_by_perm}.

    \begin{restatable}[Consistent Estimation of $\{\hat{r}_i\}_1^{K-k}$ under Permutation $\smat{P}$]{theorem}{statisticbyperm}
      \label{thm:statistic_by_perm}
    Under permutation $\smat{P}$, the empirical CCA scores 
     between $\srvec{C_X}_{k:}$ and $\srvec{C_Y}_{k:}$, i.e., $\{\hat{r}_i\}_1^{K-k}$, 
    are the singular values of 
    ${\hat{\Sigma}}_{{\srvec{C}_{\srvec{X}}}_{k:}}^{-\frac{1}{2}}
         {\hat{\Sigma}}_{{\srvec{C}_{\srvec{X}}}_{k:},{\srvec{C}_{\srvec{Y}}}_{k:}}
         {\hat{\Sigma}}_{{\srvec{C}_{\srvec{Y}}}_{k:}}^{-\frac{1}{2}}$,
         which can be consistently
         estimated by: 
    \begin{equation}
    \begin{aligned}
      \label{eq:rhat_under_perm}
    ((\smat{A}^{T}{\hat{\Sigma}_{\srvec{X}}}\smat{A})_{k:,k:})^{-\frac{1}{2}}
    %(
    ((\smat{A}^{T} \frac{ {\smat{D}^{\srvec{X}}}  ^{T}\smat{P}^{T} \smat{D}^{\srvec{Y}} }{N-1} \smat{B})_{k:,k:}
    )\\
    %\right
    %)\\
    ((\smat{B}^{T}{\hat{\Sigma}_{\srvec{Y}}}\smat{B})_{k:,k:})^{-\frac{1}{2}},
    \end{aligned}
    \end{equation}
    where 
    $\frac{ {\smat{D}^{\srvec{X}}}  ^{T}\smat{P}^{T} \smat{D}^{\srvec{Y}} }{N-1}$
     can be consistently estimated by using  
    ${\tilde{\smat{D}}^{\srvec{X}}}$
      and $\smat{P}^{T} \tilde{\smat{D}}^{\srvec{Y}}$ and assuming unit variance of variables.
    \end{restatable}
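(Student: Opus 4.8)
The plan is to split the statement into a deterministic linear-algebra identity that rewrites the three covariance blocks in terms of $\smat{A}$, $\smat{B}$, and the data, and a stochastic argument that the only block referencing the unobserved continuous matrices can be recovered from the discretized observations. For the identity, I would apply the CCA-via-SVD construction of Eq.~\ref{eq:AB_est_1} to the pair $(\srvec{C_X}_{k:},\srvec{C_Y}_{k:})$ in place of $(\srvec{X},\srvec{Y})$, so that $\{\hat r_i\}_1^{K-k}$ are by definition the singular values of $\hat\Sigma_{\srvec{C_X}_{k:}}^{-1/2}\hat\Sigma_{\srvec{C_X}_{k:},\srvec{C_Y}_{k:}}\hat\Sigma_{\srvec{C_Y}_{k:}}^{-1/2}$; no new computation is needed here. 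Since $\srvec{C_X}=\smat{A}^T\srvec{X}$ and $\srvec{C_Y}=\smat{B}^T\srvec{Y}$ are linear images, the two marginal blocks are permutation-invariant and equal $(\smat{A}^T\hat\Sigma_{\srvec{X}}\smat{A})_{k:,k:}$ and $(\smat{B}^T\hat\Sigma_{\srvec{Y}}\smat{B})_{k:,k:}$, already available from the maximum-likelihood estimates. For the middle block I would write the row-permuted empirical cross-covariance of the canonical variables as $\tfrac{1}{N-1}(\smat{P}\smat{D}^{\srvec{X}}\smat{A})_{:,k:}^T(\smat{D}^{\srvec{Y}}\smat{B})_{:,k:}$ and pull the fixed matrices $\smat{A},\smat{B}$ outside the product and the block extraction, yielding exactly $(\smat{A}^T\tfrac{(\smat{D}^{\srvec{X}})^T\smat{P}^T\smat{D}^{\srvec{Y}}}{N-1}\smat{B})_{k:,k:}$; this is pure bookkeeping of transposes plus the fact that extracting the $k\!:$ block commutes with congruence by $\smat{A},\smat{B}$.

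The substantive step is recovering the middle factor, which nominally needs $\smat{D}^{\srvec{X}},\smat{D}^{\srvec{Y}}$, from $\tilde{\smat{D}}^{\srvec{X}}$ and $\smat{P}^T\tilde{\smat{D}}^{\srvec{Y}}$. The mechanism I would exploit is that a permutation decouples the two blocks: for every non-fixed index the row of $\smat{D}^{\srvec{X}}$ is paired with a stochastically independent row of $\smat{P}^T\smat{D}^{\srvec{Y}}$, and a uniform permutation has only $O(1)$ fixed points in expectation, which is asymptotically negligible. Each entry of the permuted cross-covariance is therefore a cross-moment of two asymptotically independent variables whose leading behavior depends only on the marginal moments of each block. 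Because the discretization in Eq.~\ref{eq:discretize} is a fixed coordinatewise monotone map, $\tilde{\smat{D}}^{\srvec{X}}$ and $\smat{P}^T\tilde{\smat{D}}^{\srvec{Y}}$ inherit the same row-independence, so after standardizing every column to unit variance the discretized and continuous permuted cross-products share the same marginal scale; a combinatorial (Hoeffding-type) central limit theorem then gives that their $\sqrt{N}$-scaled entries converge to Gaussian laws with matching entrywise variances, while maximum-likelihood consistency of $\hat\Sigma_{\srvec{X}},\hat\Sigma_{\srvec{Y}}$ ensures the whitening by $\smat{A},\smat{B}$ and the normalizing factors act correctly.

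I expect the main obstacle to be controlling the joint, cross-entry covariance of this limit rather than its marginals. The asymptotic covariance between two entries of the permuted cross-product factors into a within-$\srvec{X}$ and a within-$\srvec{Y}$ second moment, and for the raw standardized ordinals these within-block quantities differ from the continuous (polychoric) ones; left unaddressed, this would perturb the singular-value law and hence the permutation null of $\lambda_k^{\smat{P}}$. The crux of the proof is therefore to show that, once the block is mapped through $\smat{A},\smat{B}$ and normalized by $(\smat{A}^T\hat\Sigma_{\srvec{X}}\smat{A})_{k:,k:}$ and $(\smat{B}^T\hat\Sigma_{\srvec{Y}}\smat{B})_{k:,k:}$, this residual second-order discrepancy is asymptotically immaterial, so that the permutation law of $\lambda_k^{\smat{P}}$ computed from discretized data converges to the same null law one would obtain from the continuous data. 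This reduction---from matching the full joint dependence of the two data types to matching only their standardized marginals under the independence induced by permutation---is what legitimizes the discretized plug-in, and combined with the exchangeability of Theorem~\ref{thm:exchangeability} it completes the argument.
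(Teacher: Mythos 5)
Your first half---deriving that $\{\hat r_i\}_1^{K-k}$ are by construction the singular values of the whitened middle block, that the two marginal blocks are permutation-invariant and reduce to $((\smat{A}^{T}\hat{\Sigma}_{\srvec{X}}\smat{A})_{k:,k:})$ and $((\smat{B}^{T}\hat{\Sigma}_{\srvec{Y}}\smat{B})_{k:,k:})$, and that the cross block equals $(\smat{A}^{T}\tfrac{(\smat{D}^{\srvec{X}})^{T}\smat{P}^{T}\smat{D}^{\srvec{Y}}}{N-1}\smat{B})_{k:,k:}$---is exactly the computation in the paper's proof. The divergence, and the gap, is in how the unobservable factor $\tfrac{(\smat{D}^{\srvec{X}})^{T}\smat{P}^{T}\smat{D}^{\srvec{Y}}}{N-1}$ is recovered from $\tilde{\smat{D}}^{\srvec{X}}$ and $\smat{P}^{T}\tilde{\smat{D}}^{\srvec{Y}}$. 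You propose to use the empirical cross-product of the standardized ordinal values and to justify it by a combinatorial CLT that matches \emph{entrywise} variances; you then correctly observe that the cross-entry covariances of that limit involve the within-$\srvec{X}$ and within-$\srvec{Y}$ second moments of the raw ordinals, which differ from the polychoric ones, and you leave as ``the crux'' the claim that this discrepancy washes out after whitening. That claim is not established and is doubtful: $\smat{A}$ and $\smat{B}$ whiten with respect to the likelihood-based estimates $\hat{\Sigma}_{\srvec{X}},\hat{\Sigma}_{\srvec{Y}}$, not with respect to the ordinal covariances, so the whitened ordinal cross-product is not asymptotically an isotropic Gaussian matrix and its singular values---hence the permutation law of $\lambda_k^{\smat{P}}$---would be distorted. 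As written, the argument does not close.

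The paper closes this step by a different and much shorter observation that makes the moment-matching machinery unnecessary: under permutation the rows entering $\srvec{X}$ and the permuted rows entering $\srvec{Y}$ are independent, and two independent marginally Gaussian vectors are \emph{jointly} Gaussian; hence $(\tilde{\smat{D}}^{\srvec{X}},\smat{P}^{T}\tilde{\smat{D}}^{\srvec{Y}})$ is again an order-preserving discretization of a jointly Gaussian sample, and the same pseudo-likelihood estimator of Eq.~\ref{eq:loss2} (polyserial/polychoric correlations with unit variances) consistently recovers its cross-correlation. In other words, the middle block is \emph{re-estimated by maximum likelihood on the permuted discretized data}, not approximated by the raw ordinal cross-product; that is the idea your proposal is missing. (To be fair, the paper's proof also does not quantify the rate at which this MLE tracks the realized $O(N^{-1/2})$ permuted cross-covariance, which is what the distribution of $\lambda_k^{\smat{P}}$ actually depends on, so the second-order concern you raise is a real one---but the paper's estimator at least targets the correct quantity, whereas the standardized-ordinal cross-product does not.)
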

    
    \begin{remark} [Remark on Theorem~\ref{thm:statistic_by_perm}]
      Theorem~\ref{thm:statistic_by_perm} implies that we can
     consistently estimate $\lambda_{k}^{\smat{P}}$
     by making use of randomly permuted data $\tilde{\smat{D}}^{\srvec{X}}$ and $\smat{P}^T\tilde{\smat{D}}^{\srvec{Y}}$.
     Note that although here the transpose of permutation applies to $\tilde{\smat{D}}^{\srvec{Y}}$,
     the correctness of the process still relies on the exchangeability between 
      $\srvec{C_X}_{k:}$ and $\srvec{C_Y}_{k:}$, and 
      does not need the exchangeability between $\srvec{X}$ and $\srvec{Y}$.
      In words, 
      doing permutation on $\tilde{\smat{D}}^{\srvec{X}}\smat{A}$ will meet the problem of 
      applying linear transformation to data that might contain ordinal values,
      and 
      Theorem~\ref{thm:statistic_by_perm} provides a way to bypass the problem
      by permuting $\tilde{\smat{D}}^{\srvec{Y}}$ instead.
    \end{remark}
    
    Till now, the remaining problem is how to consistently estimate cross-covariance matrices
    in the presence of discretization, and it will be detailed in what follows.

    \subsection{Correlation Estimation with Discretization}
    \label{sec:corr_est}
    
    Assume that we concern the rank of $\Sigma_{\srvec{X},\srvec{Y}}$, where some of the variables are discretized and 
    $\srvec{X}$ and $\srvec{Y}$ are not necessarily disjoint. 
    As mentioned, for those variables that we only have discretized observations, their variance can never be determined.
    Further, 
    the rank of a cross-covariance matrix is equivalent to the rank of the corresponding cross-correlation matrix.
    Without loss of generality, we can assume all variables to have unit variance and zero mean.
    Thus, we sometimes use correlation and covariance interchangeably.
    The remaining crucial step is to estimate the correlation matrix for $\srvec{V}=\srvec{X}\cup\srvec{Y}$, i.e.,
    $\hat{\smat{R}}$, by data $\tilde{\smat{D}}\in \mathbb{R}^{N\times |\svec{V}|}$.
    As some elements of $\srvec{V}$ are discrete, we use $\mathbb{C}_{\srvec{V}}$ and $\mathbb{D}_{\srvec{V}}$ 
    to denote the index set of continuous variables and discrete variables in $\srvec{V}$ respectively.
    
    We first introduce the overall objective function for correlation estimation as follows.
    \begin{align}
      \label{eq:loss1}
     &\hat{\smat{R}}={\operatorname{arg}\min}_{\smat{R}\in \mathbb{R}^{M \times M}} 
     ~\mathcal{L}(\tilde{\smat{D}},\smat{R}),\\
     &\mathcal{L}(\tilde{\smat{D}},\smat{R})=-\sum_{1\leq i<j \leq M} 
     \log p_{ij}(\tilde{\smat{D}}_{:,ij};\smat{R}_{i,j}),
    \end{align}
    where the optimization objective is minimizing pair-wise negative log-likelihood,
    also referred to as pseudo likelihood,
    instead of the real joint log-likelihood over all the observed variables \citep{dong2024parameter}.
    The reason lies in that
    optimizing over the joint log-likelihood is very computationally expensive
    and the pseudo likelihood is tractable while also serves as a consistent estimator \citep{besag1974spatial,gourieroux1984pseudo,gourieroux2017consistent,fan2017high}.
    
    Next, we specify the pair-wise log-likelihood in three scenarios - between two continuous variables,
    between a continuous and a discrete, and between two discrete variables.
    
    \textbf{(i) Likelihood for Two Continuous Variables}
    
    If both $i,j\in \mathbb{C}_{\srvec{V}}$,
    %and $j\in \mathbb{C}_{\srvec{V}}$,
    the log-likelihood function
    $\log p_{ij}(\tilde{\smat{D}}_{:,ij};\smat{R}_{i,j})$
    is just the joint gaussian pdf parametrized by $\smat{R}_{i,j}$ given as follows:
    \vspace{-1mm}
    \begin{align}
      %&\log p_{ij}(\tilde{\smat{D}}_{:,ij};\smat{R}_{i,j})=\\
       (1/2)
      %\left
      %(
      (\operatorname{tr}\left({
        \begin{bmatrix}
          1,\smat{R}_{i,j}\\
          \smat{R}_{i,j},1 
        \end{bmatrix}  
      }^{-1}
      {
        \begin{bmatrix}
          1,\hat{\smat{R}}_{i,j}\\
          \hat{\smat{R}}_{i,j},1 
        \end{bmatrix} 
      }
      \right)+\log \det {
        \begin{bmatrix}
          1,\smat{R}_{i,j}\\
          \smat{R}_{i,j},1 
        \end{bmatrix} 
      }
      %)
      %\right
    ),%\nonumber
    \vspace{-1mm}
    \end{align}
    where $\hat{\smat{R}}_{i,j}$ is the empirical correlation matrix 
    that can be directly calculated from data $\tilde{\smat{D}}_{:,ij}$.
    
    \textbf{(ii) Likelihood for a Continuous and a Discrete Variable}
    
    If $i\in \mathbb{C}_{\srvec{V}}$ and $j\in \mathbb{D}_{\srvec{V}}$,
    then the log-likelihood (also known as polyserial correlation estimation \citep{olsson1982polyserial})
    $\log p_{ij}(\tilde{\smat{D}}_{:,ij};\smat{R}_{i,j})$
    can be factorized as follows.
    \begin{align}
      %&\log p_{ij}(\tilde{\smat{D}}_{:,ij};\smat{R}_{i,j}) = \\
      \frac{1}{N} \sum_{k=1}^{N} \log p(\srv{V_i}=
      \tilde{\smat{D}}_{k,i})p(\srv{V_j}=\tilde{\smat{D}}_{k,j}|\srv{V_i}=\tilde{\smat{D}}_{k,i},\smat{R}_{i,j}),
    \end{align}
    where $p(\srv{V_i}=
    \tilde{\smat{D}}_{k,i})$ is a standard gaussian pdf.
    For a specific value of $\tilde{\smat{D}}_{k,j}$, say, $t$,
    we have that:
    \begin{equation}
    \begin{aligned}
    &p(\srv{V_j}=\tilde{\smat{D}}_{k,j}|\srv{V_i}
    =\tilde{\smat{D}}_{k,i},\smat{R}_{i,j})
     \\
     &=p(T_{t}^j < \srv{V_j}\leq T_{t+1}^j|\srv{V_i}=\tilde{\smat{D}}_{k,i},\smat{R}_{i,j})\\
    &=\Phi
    %\left
    (\frac{T^{j}_{t+1}-\smat{R}_{i,j}{\tilde{\smat{D}}_{k,i}}  }{(1-\smat{R}_{i,j}^2)^{1/2}}
    )
    %\right 
    - \Phi
    %\left
    (\frac{T^{j}_{t}-\smat{R}_{i,j}{\tilde{\smat{D}}_{k,i}}    }{(1-\smat{R}_{i,j}^2)^{1/2}}
    %\right
    ),
    \end{aligned}
    \end{equation}
    where $\Phi$ is the standard gaussian cdf. We note that the thresholds $T$ are unknown,
    thus it could be taken as free parameters during optimization.
    In practice, it is more efficient to estimate the thresholds first by using inverse gaussian cdf:
    %as follows:
    \vspace{-2mm}
    \begin{align}
      \label{eq:thres_est}
     \hat{T}_{t+1}^j = \Phi^{-1}(\frac{\sum_{k=1}^{N}\bm{1}_{[\tilde{\smat{D}}_{k,j}\leq t]} }{N}).
    \end{align}
    
    \textbf{(iii) Likelihood for Two Discrete Variables}
    
    If both $i,j\in \mathbb{D}_{\srvec{V}}$,
    then the log-likelihood (also known as polychoric correlation estimation \citep{olsson1979maximum,joreskog1994estimation})
    $\log p_{ij}(\tilde{\smat{D}}_{:,ij};\smat{R}_{i,j})$  is as follows.
    \vspace{-1mm}
    \begin{equation}
    \begin{aligned}
      %&\log p_{ij}(\tilde{\smat{D}}_{:,ij};\smat{R}_{i,j}) =\\ 
      %&\frac{1}{N} \sum_{k=1}^{N} \log p(\srv{V_i}=\tilde{\smat{D}}_{k,i}, \srv{V_j}=\tilde{\smat{D}}_{k,j}|\smat{R}_{i,j})=\\
      &\frac{1}{N} \sum_{k=1}^{N} \log  (
        \Phi_2(T^{i}_{\tilde{\smat{D}}_{k,i}+1}, T^{j}_{\tilde{\smat{D}}_{k,j}+1}; \smat{R}_{i,j})\\
       &~~~~~~~~~~~~~+ \Phi_2(T^{i}_{\tilde{\smat{D}}_{k,i}}, T^{j}_{\tilde{\smat{D}}_{k,j}}; \smat{R}_{i,j}) \\
       &~~~~~~~~~~~~~- \Phi_2(T^{i}_{\tilde{\smat{D}}_{k,i}+1}, T^{j}_{\tilde{\smat{D}}_{k,j}}; \smat{R}_{i,j}) \\
       &~~~~~~~~~~~~~-\Phi_2(T^{i}_{\tilde{\smat{D}}_{k,i}}, T^{j}_{\tilde{\smat{D}}_{k,j}+1}; \smat{R}_{i,j}) 
       ),
    \end{aligned}
    \end{equation}
    where 
     $\Phi_2(.,.,r)$ is the joint cdf of two standard gaussian variables with correlation $r$
     and the thresholds for each variable can also be estimated by using Eq.~\ref{eq:thres_est}.
    
    \vspace{-1mm}
    \subsection{Parameterization Trick for Rank Test}
    \vspace{-1mm}
    We note that the optimization problem in Eq.~\ref{eq:loss1} does 
    not constrain the space to be a pseudo-correlation matrix - a matrix that is PSD with unit diagonal elements.
    If we only care about the maximum likelihood estimator,
    the  pseudo-correlation requirement might be unnecessary.
    However, as we rely on SVD for CCA and rank test,
    the requirement of being pseudo-correlation matrix is crucial.
    A classical way to solve this problem is by projected gradient descent:
    projecting the current solution to the space of pseudo-correlation matrices
    after each step of gradient descent.
    Yet, in practice we found this solution less effective, 
    as the projection cannot be analytically solved and requires an additional optimization step.
    
    To this end, we directly parameterize the space of pseudo-correlation matrices in a geometric way following
    \citep{rousseeuw1993transformation}, given as follows.
    \begin{equation}
    \begin{aligned}
    &\smat{R}=\smat{U}^T\smat{U},\\
    &\smat{U}_{j,i}=\left\{
    \begin{aligned}
    &\cos{\smat{\theta}_{i-j+1,i}}\Pi_{k=1}^{i-j}{\sin}{\smat{\theta}_{k,i}}, & j\leq i \\
    &0, & j>i
    \end{aligned}
    \right. ,\\
    &~\text{s.t.,}~\smat{\theta}_{i,i}=0,~\forall i.
    \end{aligned}
    \vspace{-1mm}
    \end{equation}
    \vspace{-2mm}

     Therefore, we have an alternative way to parameterize the correlation matrix, which gives rise to the following new formulation 
     of our objective function (instead of Eq.~\ref{eq:loss1}):
     %In Sec.~\ref{sec:exp}, we will show that such a parameterization method does perform better. 
     \begin{align}
      \label{eq:loss2}
      \hat{\smat{R}}={\operatorname{arg}\min}_{\smat{\theta}} 
      ~\mathcal{L}(\tilde{\smat{D}},\smat{R}).
     \end{align}
     We summarize the overall testing procedure of our proposed MPRT in Algorithm~\ref{alg:ranktest}.

     % \begin{algorithm}[tb]
%   \caption{Latent Variable Causal Discovery}
%   \label{alg:all}
% \begin{algorithmic}
%   \STATE {\bfseries Input:} Samples of all $n$ observed variables $\mathbf{X}_{\mathcal{G}}$.
%   \STATE {\bfseries Output:} Markov equivalence class $\mathcal{G'}$.
%   \STATE Phase 1: $\mathcal{G'}$ = FindCausalClusters($\mathbf{X}_{\mathcal{G}}$);
%   \STATE Phase 2: $\mathcal{G'}$ = RefineCausalClusters($\mathcal{G'}$, $\mathbf{X}_{\mathcal{G}}$);
% \end{algorithmic}
% \end{algorithm}

  \begin{algorithm}[tb]
    \caption{Mixed data Permutation-based Rank Test}
    \label{alg:ranktest}
   \begin{algorithmic}[1]
    \STATE {\bfseries Input:~}{Sample $\tilde{\smat{D}}^{\srvec{X}}$, $\tilde{\smat{D}}^{\srvec{Y}}$,
    indexes of discretized columns,
  null hypothesis $\mathcal{H}^{k}_{0}: \text{rank}(\Sigma_{\srvec{X},\srvec{Y}})\leq k$, and significant level $\alpha$;}
  \STATE {\bfseries Output:~}{True (fail to reject $\mathcal{H}^{k}_{0}$) or False (reject $\mathcal{H}^{k}_{0}$);}
  \STATE $P=|\srvec{X}|$,~$Q=|\srvec{Y}|$,~and $K=\min(P,Q)$\;
  \STATE Get $\hat{\Sigma}_{\srvec{X}}$, $\hat{\Sigma}_{\srvec{X},\srvec{Y}}$
  , and $\hat{\Sigma}_{\srvec{Y}}$ 
  as submatrices of $\hat{\smat{R}}$ 
  by Eq.~\ref{eq:loss2}
   (unit variance assumed)\;
   \STATE Calculate $\smat{A}$ and $\smat{B}$ following Eq.~\ref{eq:AB_est_1}.\;
   \STATE Let $\smat{P}=\smat{I}$ (no permutation), calculate $\{\hat{r}_i\}_1^{K-k}$ following Eq.~\ref{eq:rhat_under_perm}
  and then the statistic $\lambda_{k}$ following Eq.~\ref{eq:alternative_way_statistic}\;
      \FOR{each random permutation $\smat{P}$}
      \STATE Calculate $\{\hat{r}_i\}_1^{K-k}$ under $\smat{P}$ following Eq.~\ref{eq:rhat_under_perm}
      and then the statistic under $\smat{P}$, i.e., $\lambda_{k}^{\smat{P}}$, following Eq.~\ref{eq:statistic}\;
      \ENDFOR
      \STATE Calculate p-value $p_k$ by Eq.~\ref{eq:pvalue}\;
      \STATE {\bfseries return~}{$p_k\geq \alpha$}
   \end{algorithmic}
   \end{algorithm}

    \begin{figure*}[t]
      \vspace{-0mm}
    \setlength{\belowcaptionskip}{0mm}
      %\centering
      \hspace{8mm}
      \subfloat[The probability of Type I errors with $\alpha=0.05$.]%{0.48\textwidth}
      {
        %\centering
        \includegraphics[width=0.42\textwidth]{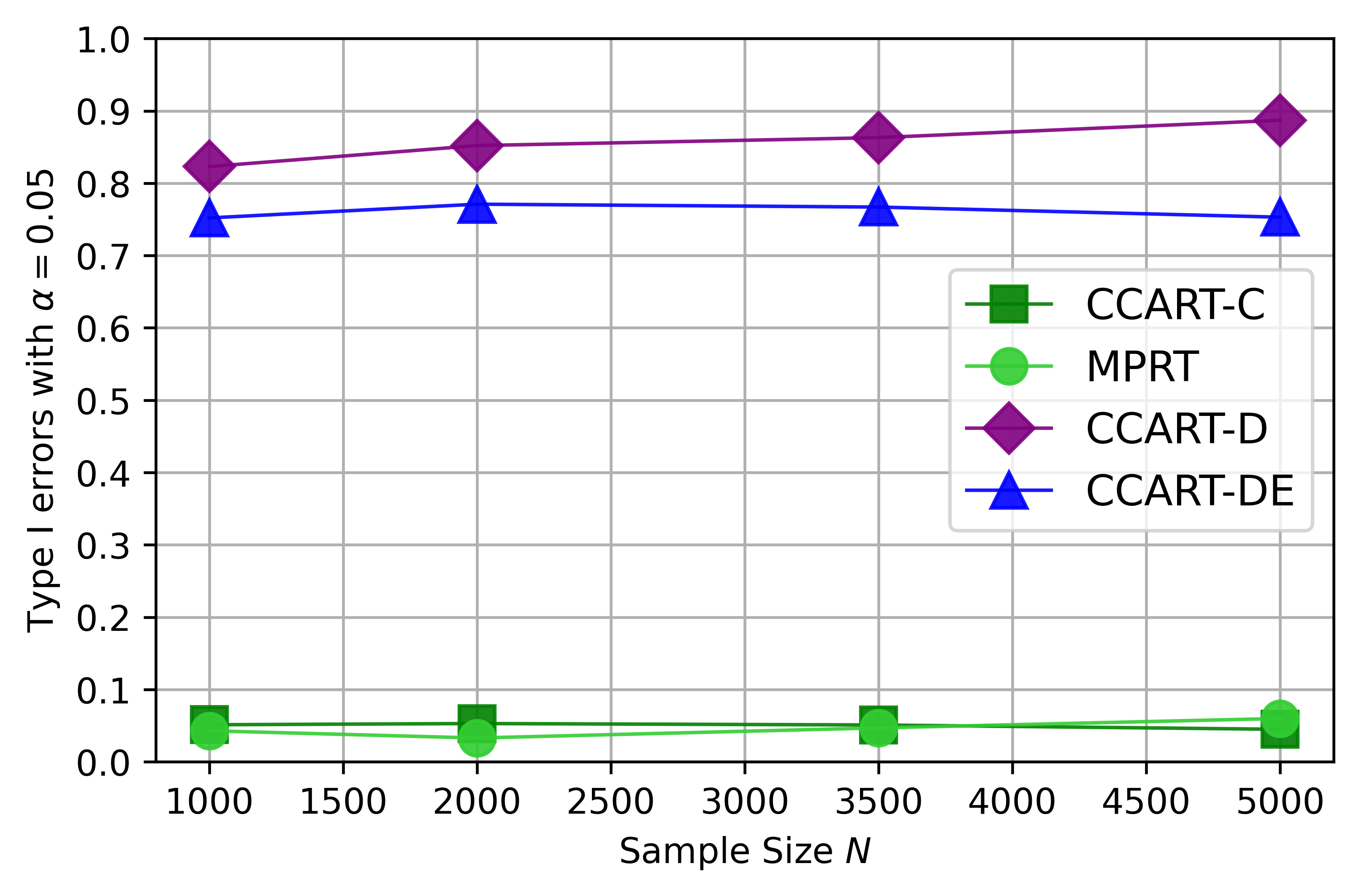}
      }
    %\hfill
    \hspace{4mm}
    \subfloat[Type II errors (effective Type I controlled at $0.05$).]
    {
      \centering
      \includegraphics[width=0.42\textwidth]{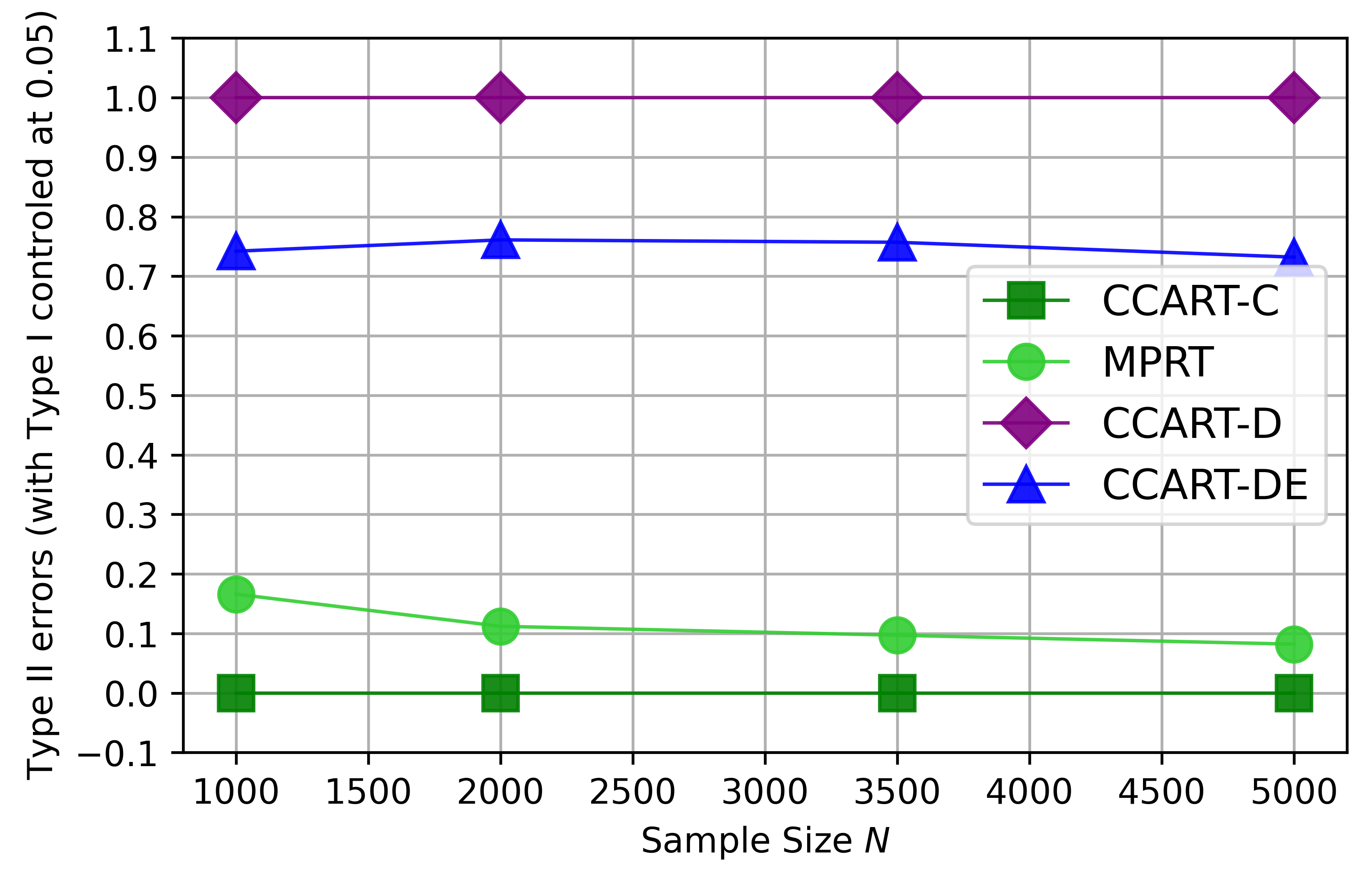}
    }
    \vspace{-2mm}
    \caption{The probability of Type I and Type II errors with \textbf{mixed data},
    by different rank test methods,  under different sample sizes. 
    %$N=1000,2000,5000$.
    }
      \label{fig:mixed_type1_type2}
      \vspace{-2mm}
    \end{figure*}
    %\vspace{-4mm}

    \begin{figure*}[t]
      %\vspace{-2mm}
      % \setlength{\abovecaptionskip}{-2.5mm}
    \setlength{\belowcaptionskip}{0mm}
      %\centering
      \hspace{8mm}
      \subfloat[The probability of Type I errors with $\alpha=0.05$.]%{0.48\textwidth}
      {
        %\centering
        %\vspace{-2mm}
        \includegraphics[width=0.42\textwidth]{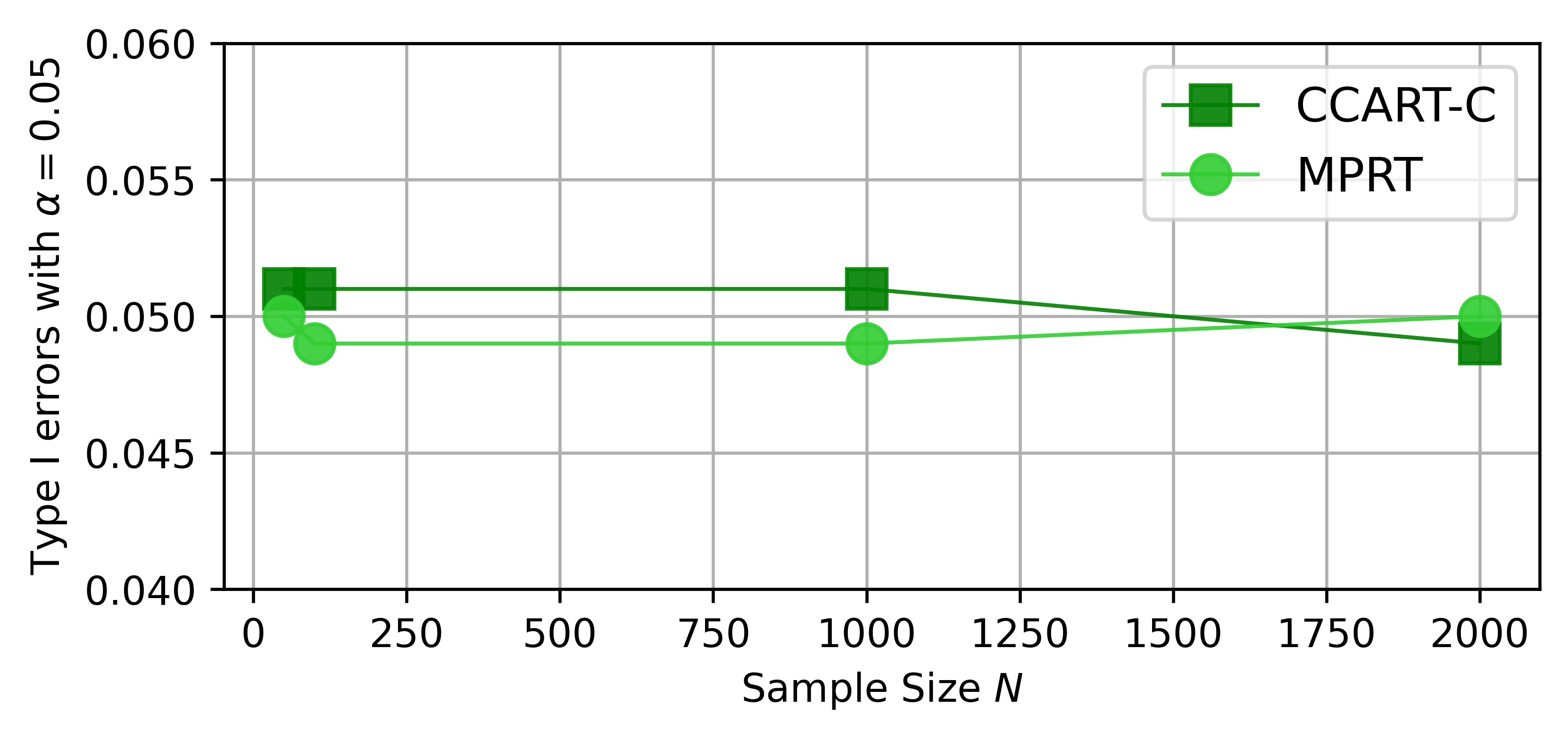}
        %\vspace{-2mm}
      }
      %\hfill
      \hspace{4mm}
    \subfloat[Type II errors (effective Type I controlled at $0.05$).]%{0.48\textwidth}
    {
      %\centering
      %\hspace{5mm}
      \includegraphics[width=0.42\textwidth]{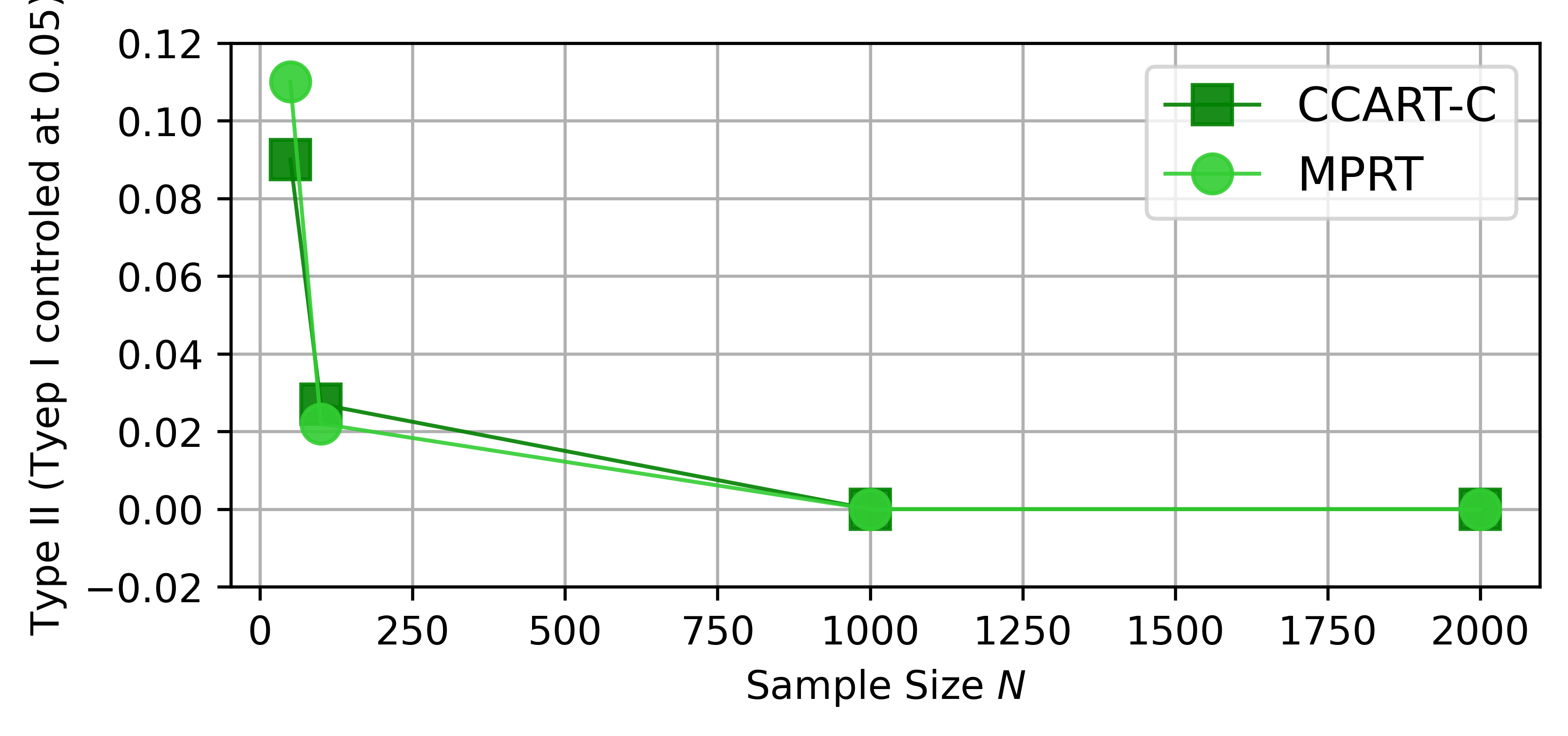}
    }
    \vspace{-1mm}
    \caption{The probability of Type I and Type II errors with \textbf{continuous data},
    by different rank test methods,  under different sample sizes.
    %$N=50,100,1000,2000$.
    }
      \label{fig:continuous_type1_type2}
    \end{figure*}

    \section{Experiments}
    \label{sec:exp}
    \subsection{Experimental Setting}
    \label{sec:exp_setting}
    To empirically validate the proposed Mixed data Permutation-based Rank Test (MPRT),
    we apply our method to synthetic data and compare it with the following methods.
    (i) CCART-C: CCA-based Rank Test \citep{ranktest} that use the original continuous observation as input;
    as it has access to the original observations, its performance is taken as the best
    possible performance that we can achieve.
    (ii)  CCART-D: CCA-based Rank Test with Discrete data;
     it directly takes the ordinal values as input.
    (iii)  CCART-DE: CCA-based Rank Test with Discrete data Estimating covariance;
     it takes the estimated correlation matrix as input (following Eq.~\ref{eq:loss2}).
     \looseness=-1
    
    We consider two scenarios: mixed data scenario where data are partially discretized,
    and all continuous scenario where all the original observations are available.
    The first scenario is to illustrate how well can we handle discretization while the 
    second is to show that our method can serve as a general rank test method as we also work well when there is no discretization.
    In terms of performance, we concern both Type I errors and Type
    II errors. Specifically,
    we expect a good test can properly control the Type I errors given a significance level $\alpha$,
    while the Type II errors should be as small as possible.
    We consider different sample sizes, and for each comparison, we consider 3000 random trials.
    For MPRT, we randomly generated 200 permutations to calculate the p-value.
    The ground truth covariance matrices are randomly generated. For the mixed scenario,
    we uniformly generate two thresholds from $[-1.5, 1.5]$ for each variable that should be discretized,
     and use the thresholds together with $-\infty$ and $\infty$
    to discretize the continuous observations into three categories $\{1,2,3\}$.
    
    We also apply the proposed MPRT method with mixed data to the classical causal discovery method 
    PC algorithm \citep{spirtes2000causation}
    and see whether our test method can better test CI relations 
    compared to the classical Fisher-Z CI test \citep{fisher1921014}, 
    in the presence of discretization.
    Fisher-Z is only compared by the result of PC and cannot be not compared in the previous setting, as 
    linear CI relations can only correspond to a part of the rank information.
    Finally, we employ a real-life dataset to illustrate the applicability of the proposed method in real-life scenarios.

    \begin{center}
      \begin{table*}[tb]
      \vspace{-1mm}
        \caption{F1 score and SHD of the PC algorithm, with different CI test methods
         ($\uparrow$ the bigger the better while $\downarrow$ the smaller the better). }
         \vspace{-0mm}
         \label{tab:pc_result}
        \footnotesize
        \center 
      \begin{center}
      \begin{tabular}{|c|c|c|c|c|c|c|c|}
        \hline  \multicolumn{1}{|c|}{} &\multicolumn{3}{|c|}{{F1 score for skeleton $\uparrow$} }&\multicolumn{3}{|c|}{{SHD for skeleton $\downarrow$} }\\
        \hline 
        \multicolumn{1}{|c|}{CI test method}  & $N=500$ & $N=1000$ & $N=2000$ & $N=500$ & $N=1000$ &$N=2000$\\
        \hline 
        \textbf{MPRT} & \textbf{0.84} &\textbf{0.9} &\textbf{0.96} &\textbf{0.80} &\textbf{0.60} &\textbf{0.20}\\
        \hline 
        Fisher-Z& 0.81 & 0.80 &0.78 & 1.20& 1.20 &1.40\\
        \hline 
        KCI & 0.81 & 0.88 &0.86 & 1.00& 0.80 & 0.93\\
        \hline 
        CCART-D& 0.75 & 0.79  & 0.77&1.60 &1.60 &1.80\\
        \hline 
        CCART-DE& 0.80 &0.85 & 0.83& 1.40 & 1.30  &1.60\\
        \hline 
        %\caption{F1 score for $\set{V}$.}
      \end{tabular}
      \end{center}
      \vspace{-1mm}
      \end{table*}
      \end{center}
    
    \begin{figure*}[t]
      %\vspace{-2mm}
      % \setlength{\abovecaptionskip}{-2.5mm}
    \setlength{\belowcaptionskip}{0mm}
      \centering
      \subfloat[Discovered  personality substructure for Openness.]%{0.48\textwidth}
      {
        \centering
        \includegraphics[width=0.45\textwidth]{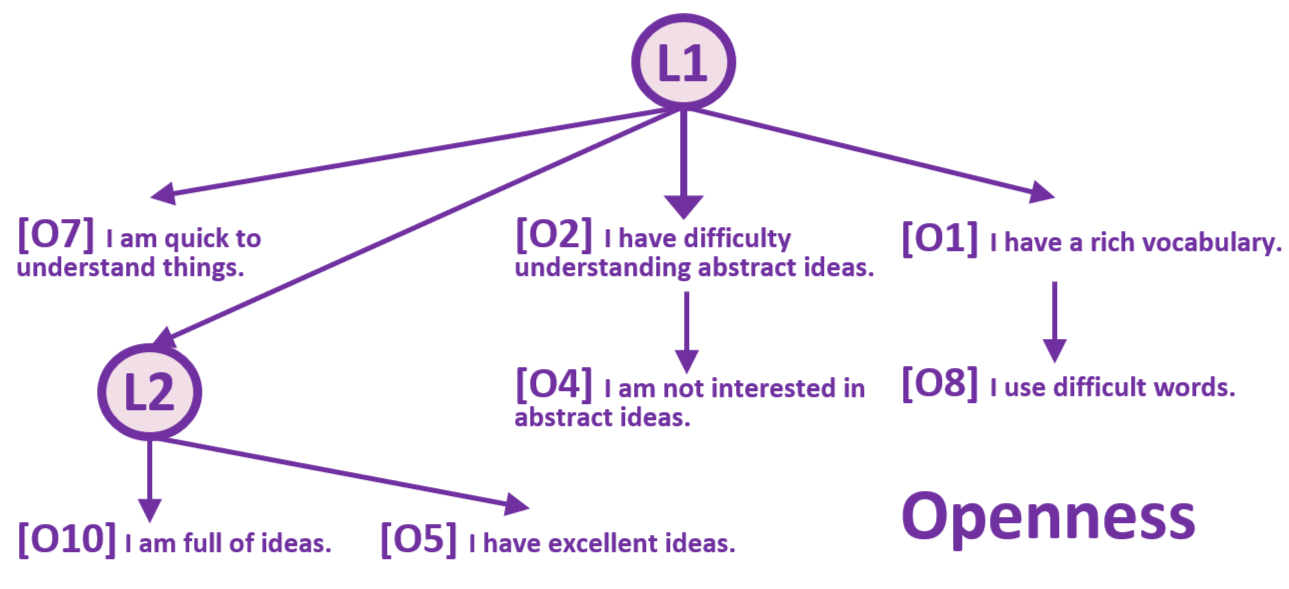}
      }
    %\hfill
    \centering
    \subfloat[Discovered  substructure for Neuroticism.]%{0.48\textwidth}
    {
      \centering
      \includegraphics[width=0.45\textwidth]{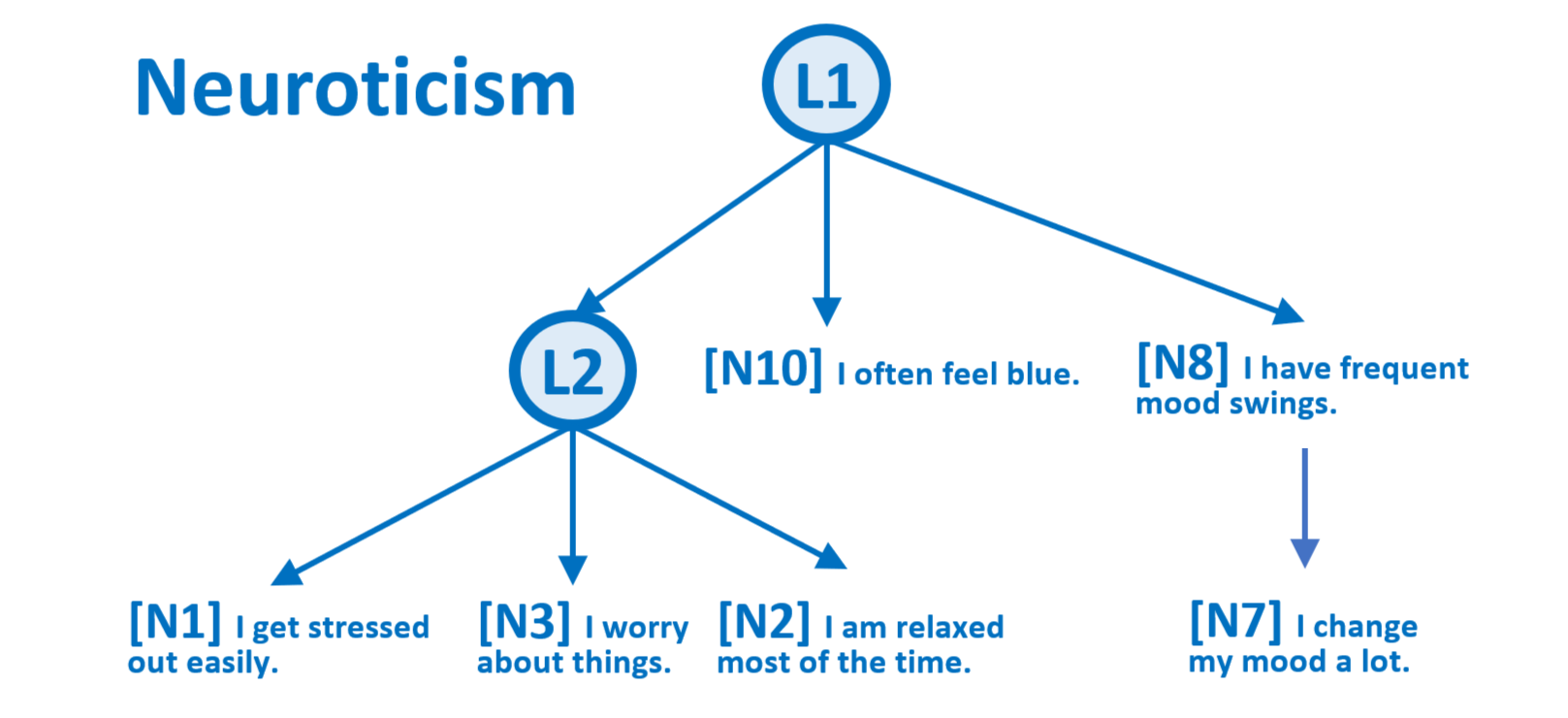}
    }
    \vspace{-1mm}
    \caption{Application of MPRT in causal discovery using real-life Big Five human personality data.}
      \label{fig:big5}
      \vspace{-0mm}
    \end{figure*}

      \vspace{-1em}
    \subsection{Analysis on Type I and Type II Errors under Different Sample Sizes}
    \label{exp:type1type2}
    In this section we analyze the performance of each method in terms of Type I and Type II errors
    under different sample sizes.
    For the mixed data scenario, the result is shown in Figure~\ref{fig:mixed_type1_type2}.
    Specifically, one can see that both our proposed MPRT and CCART-C can properly control the Type I errors
    as the Type I errors of them are both very close to the significance level $\alpha=0.05$;
    in contrast, CCART-D and CCART-DE totally failed to control the Type I errors.
    As for Type II errors, it can be found that the Type II errors of MPRT are quite small, 
    and decreases with the increase of sample size $N$, while CCART-D and CCART-DE  cannot benefit from the increase in sample size.
    We note that it is very natural that MPRT cannot beat CCART-C as CCART-C takes the original continuous observation as input
    while MPRT takes mixed data as input.
    We show the performance of CCART-C just in order to show the
     minimal possible Type II errors that one can achieve in the presence of discretization.
     \looseness=-1

    We also show the performance when both CCART-C and MPRT have access to the original continuous observations,
    as in Figure~\ref{fig:continuous_type1_type2}.
    Specifically, both methods properly control the Type I errors as in the subfigure~\ref{fig:continuous_type1_type2} (a).
    For the Type II errors, the performance of CCART-C and MPRT is almost the same.
    This is as expected, as in this scenario
     both methods use exactly the same test statistics except that CCART-C uses the analytically derived 
    null distribution to get the p-value while MPRT uses the empirical CDF to calculate the p-value;
    the two results are expected to be exactly the same asymptotically.
    
    Taking the performance under these two scenarios together into consideration
     it can be argued that MPRT is a very general and valid rank test
     as it can handle all continous data,
    partially discretized data, and all discretized data and the Type I are properly controlled 
    while the power is also good.
    \looseness=-1

    \vspace{-1em}
    \subsection{Application in Causal Discovery}
    In this section we validate our test using the PC algorithm \citep{spirtes2000causation}.
    Specifically,
    we consider linear causal models with gaussian noises 
    $\srv{V}_i=\sum \nolimits_{\srv{V}_j \in \text{Pa}(\srv{V}_i)} a_{ij} \srv{V}_j + \varepsilon_{\srv{V}_i}$, 
    where the edge coefficients and the variance of the noises are randomly generated.
    We consider the scenario where data are partially discretized and 
    compare MPRT with Fisher-Z to see which one works better with PC.
    We employ 
     F1 score $\text{F1}=\frac{2*\text{Recall}*\text{Precision}}{\text{Recall}+\text{Precision}}$ for skeleton
     (the bigger the better)
      and Structural Hamming Distance (SHD) for skeleton (the smaller the better) to evaluate the performance.
    As shown in  Table~\ref{tab:pc_result}, MPRT achieves the best performance in terms of both F1 and SHD,
    under all sample sizes. This validates the claim that MPRT can serve as a powerful CI test
    for causal discovery in the presence of discretization.
    
    \subsection{Real-world Causal Discovery Application}

    In this section, 
    we further validate our proposed MPRT method using a real-world Big Five Personality dataset
     \url{https://openpsychometrics.org/}. 
     It consists of 50 personality indicators and close to 20,000 data points. 
     Each Big Five personality dimension, namely, Openness, Conscientiousness, Extraversion, Agreeableness, 
     and Neuroticism (O-C-E-A-N), 
     are designed to be measured with their own 10 indicators and the values of each variable
      are ordinal: Disagree, slightly disagree, Neutral, Slightly agree, and Agree.
    We employ RLCD \citep{dong2023versatile}, a recently proposed rank based causal discovery method
    with our MPRT method. We choose 7 items from openness and 6 items from neuroticism to verify our method.
    
    The results are shown in Figure~\ref{fig:big5}.
    Specifically, for openness we discovered two latent variables.
    L2 corresponds to whether a person has a lot of ideas while L1 corresponds to the general concept of openness.
    As for neuroticism, we also discovered two latent variables.
    L1 relates more to one's emotions while L2 relates to one's stress level.
    In contrast, if we directly use the ordinal values to do the rank test, i.e., using CCART-D,
    all the p-values tend to be very small, and thus we have to use very small significance level (around 1e-10)
    in order to have some structures discovered; yet 
    using such an extremely small alpha value will induce a lot of Type II errors.
    This result illustrates the superiority of using MPRT in the presence of discretizations in real-life scenarios,
    and again empirically validate the proposed method.
    \looseness=-1

    \subsection{Discussion about Unit Variance Assumption in Correlation Estimation and Non-Gaussianity}
    
    In \cref{sec:corr_est}, 
    we assume that the underlying continuous variables have unit variance and zero mean. Violation of this assumption, i.e., shift and rescaling of variables, does not affect the validity of our method. This is because we care about the rank of the cross-covariance matrix, which is equal to  the rank of the cross-correlation matrix; the latter is clearly invariant to shift or rescaling of either some or all variables. Thus, in \cref{sec:corr_est} we  assume all variables are standardized just for simplicity of notation.

    If we assume that the underlying continuous variables follow a linear SCM, but the joint distribution are not necessarily gaussian anymore,
the proposed method can still work, as long as the parametric form is given: we only need to modify the likelihood function in Section~\ref{sec:corr_est} according to the corresponding parametric form for correlation estimation. 
As a comparison, traditional CCA-based rank tests must assume normality to infer the null distribution. 
On the other hand, if the parametric form is not given, which means we do not have any information about the shape of the distribution,
it may be very hard to consistently recover the thresholds and the underlying correlation, due to insufficient information.

    \section{Conclusion}
    In this paper, we propose a novel permutation-based rank test that works in the presence of discretization.
    It is rather general as it can accommodate fully continuous data, partially discretized data, or fully discretized data as input.
    Extensive experiments empirically validate our method.

\section*{Impact Statement}
This paper presents work whose goal is to advance the field of Machine Learning. There are many potential societal consequences of our work, none which we feel must be specifically highlighted here.

\section*{Acknowledgment}
We would like to acknowledge the support from NSF Award No. 2229881, AI Institute for Societal Decision Making (AI-SDM), the National Institutes of Health (NIH) under Contract R01HL159805, and grants from Quris AI, Florin Court Capital, and MBZUAI-WIS Joint Program.
IN acknowledges the support of the Natural Sciences and Engineering Research Council of Canada (NSERC) Postgraduate Scholarships – Doctoral program.

\bibliography{example_paper}

\begin{thebibliography}{50}
\providecommand{\natexlab}[1]{#1}
\providecommand{\url}[1]{\texttt{#1}}
\expandafter\ifx\csname urlstyle\endcsname\relax
  \providecommand{\doi}[1]{doi: #1}\else
  \providecommand{\doi}{doi: \begingroup \urlstyle{rm}\Url}\fi

\bibitem[Anderson(1984)]{ranktest}
Anderson, T.~W.
\newblock \emph{An Introduction to Multivariate Statistical Analysis. 2nd ed}.
\newblock John Wiley \& Sons, 1984.

\bibitem[Baba et~al.(2004)Baba, Shibata, and Sibuya]{baba2004partial}
Baba, K., Shibata, R., and Sibuya, M.
\newblock Partial correlation and conditional correlation as measure of conditional independence.
\newblock \emph{Australian and New Zealand Journal of Statistics}, 46:\penalty0 657--664, 12 2004.

\bibitem[Besag(1974)]{besag1974spatial}
Besag, J.
\newblock Spatial interaction and the statistical analysis of lattice systems.
\newblock \emph{Journal of the Royal Statistical Society: Series B (Methodological)}, 36\penalty0 (2):\penalty0 192--225, 1974.

\bibitem[Bochnak et~al.(2013)Bochnak, Coste, and Roy]{bochnak2013real}
Bochnak, J., Coste, M., and Roy, M.-F.
\newblock \emph{Real algebraic geometry}, volume~36.
\newblock Springer Science \& Business Media, 2013.

\bibitem[Changsheng \& Yongfeng(2012)Changsheng and Yongfeng]{changsheng2012investor}
Changsheng, H. and Yongfeng, W.
\newblock Investor sentiment and assets valuation.
\newblock \emph{Systems Engineering Procedia}, 3:\penalty0 166--171, 2012.

\bibitem[Colombo et~al.(2012)Colombo, Maathuis, Kalisch, and Richardson]{colombo2012learning}
Colombo, D., Maathuis, M.~H., Kalisch, M., and Richardson, T.~S.
\newblock Learning high-dimensional directed acyclic graphs with latent and selection variables.
\newblock \emph{The Annals of Statistics}, pp.\  294--321, 2012.

\bibitem[David(2008)]{david2008beginnings}
David, H.~A.
\newblock The beginnings of randomization tests.
\newblock \emph{The American Statistician}, 62\penalty0 (1):\penalty0 70--72, 2008.

\bibitem[Di(2009)]{di2009t}
Di, Y.
\newblock t-separation and d-separation for directed acyclic graphs.
\newblock \emph{preprint}, 2009.

\bibitem[Dong et~al.(2024{\natexlab{a}})Dong, Huang, Ng, Song, Zheng, Jin, Legaspi, Spirtes, and Zhang]{dong2023versatile}
Dong, X., Huang, B., Ng, I., Song, X., Zheng, Y., Jin, S., Legaspi, R., Spirtes, P., and Zhang, K.
\newblock A versatile causal discovery framework to allow causally-related hidden variables.
\newblock In \emph{ICLR}, 2024{\natexlab{a}}.

\bibitem[Dong et~al.(2024{\natexlab{b}})Dong, Ng, Huang, Sun, Jin, Legaspi, Spirtes, and Zhang]{dong2024parameter}
Dong, X., Ng, I., Huang, B., Sun, Y., Jin, S., Legaspi, R., Spirtes, P., and Zhang, K.
\newblock On the parameter identifiability of partially observed linear causal models.
\newblock In \emph{NeurIPS}, 2024{\natexlab{b}}.

\bibitem[Doran et~al.(2014)Doran, Muandet, Zhang, and Sch{\"o}lkopf]{doran2014permutation}
Doran, G., Muandet, K., Zhang, K., and Sch{\"o}lkopf, B.
\newblock A permutation-based kernel conditional independence test.
\newblock In \emph{UAI}, pp.\  132--141, 2014.

\bibitem[Fan et~al.(2017)Fan, Liu, Ning, and Zou]{fan2017high}
Fan, J., Liu, H., Ning, Y., and Zou, H.
\newblock High dimensional semiparametric latent graphical model for mixed data.
\newblock \emph{Journal of the Royal Statistical Society Series B: Statistical Methodology}, 79\penalty0 (2):\penalty0 405--421, 2017.

\bibitem[Finney(1952)]{finney1952probit}
Finney, D.~J.
\newblock Probit analysis: a statistical treatment of the sigmoid response curve.
\newblock 1952.

\bibitem[Fisher(1924)]{fisher1924distribution}
Fisher, R.~A.
\newblock The distribution of the partial correlation coefficient.
\newblock \emph{Metron}, 3:\penalty0 329--332, 1924.

\bibitem[Fisher et~al.(1921)]{fisher1921014}
Fisher, R.~A. et~al.
\newblock 014: On the" probable error" of a coefficient of correlation deduced from a small sample.
\newblock 1921.

\bibitem[Fukumizu et~al.(2007)Fukumizu, Gretton, Sun, and Sch\"{o}lkopf]{fukumizu2007kernel}
Fukumizu, K., Gretton, A., Sun, X., and Sch\"{o}lkopf, B.
\newblock Kernel measures of conditional dependence.
\newblock In \emph{Advances in Neural Information Processing Systems}, volume~20, 2007.

\bibitem[Gourieroux et~al.(1984)Gourieroux, Monfort, and Trognon]{gourieroux1984pseudo}
Gourieroux, C., Monfort, A., and Trognon, A.
\newblock Pseudo maximum likelihood methods: Theory.
\newblock \emph{Econometrica: journal of the Econometric Society}, pp.\  681--700, 1984.

\bibitem[Gouri{\'e}roux et~al.(2017)Gouri{\'e}roux, Monfort, and Renault]{gourieroux2017consistent}
Gouri{\'e}roux, C., Monfort, A., and Renault, E.
\newblock Consistent pseudo-maximum likelihood estimators.
\newblock \emph{Annals of Economics and Statistics/Annales d'{\'E}conomie et de Statistique}, \penalty0 (125/126):\penalty0 187--218, 2017.

\bibitem[Gretton et~al.(2007)Gretton, Fukumizu, Teo, Song, Sch\"{o}lkopf, and Smola]{gretton2007kernel}
Gretton, A., Fukumizu, K., Teo, C., Song, L., Sch\"{o}lkopf, B., and Smola, A.
\newblock A kernel statistical test of independence.
\newblock In \emph{Advances in Neural Information Processing Systems}, 2007.

\bibitem[Hotelling(1992)]{hotelling1992relations}
Hotelling, H.
\newblock Relations between two sets of variates.
\newblock In \emph{Breakthroughs in statistics: methodology and distribution}, pp.\  162--190. Springer, 1992.

\bibitem[Huang et~al.(2020)Huang, Zhang, Zhang, Ramsey, Sanchez-Romero, Glymour, and Sch{\"o}lkopf]{huang2020causal}
Huang, B., Zhang, K., Zhang, J., Ramsey, J., Sanchez-Romero, R., Glymour, C., and Sch{\"o}lkopf, B.
\newblock Causal discovery from heterogeneous/nonstationary data.
\newblock \emph{Journal of Machine Learning Research}, 21\penalty0 (89):\penalty0 1--53, 2020.

\bibitem[Huang et~al.(2022)Huang, Low, Xie, Glymour, and Zhang]{huang2022latent}
Huang, B., Low, C. J.~H., Xie, F., Glymour, C., and Zhang, K.
\newblock Latent hierarchical causal structure discovery with rank constraints.
\newblock \emph{arXiv preprint arXiv:2210.01798}, 2022.

\bibitem[Johnson et~al.(2019)Johnson, Ulvenes, {\O}ktedalen, and Hoffart]{johnson2019psychometric}
Johnson, S.~U., Ulvenes, P.~G., {\O}ktedalen, T., and Hoffart, A.
\newblock Psychometric properties of the general anxiety disorder 7-item (gad-7) scale in a heterogeneous psychiatric sample.
\newblock \emph{Frontiers in psychology}, 10:\penalty0 1713, 2019.

\bibitem[Jordan(1875)]{jordan1875essai}
Jordan, C.
\newblock Essai sur la g{\'e}om{\'e}trie {\`a} $ n $ dimensions.
\newblock \emph{Bulletin de la Soci{\'e}t{\'e} math{\'e}matique de France}, 3:\penalty0 103--174, 1875.

\bibitem[J{\"o}reskog(1994)]{joreskog1994estimation}
J{\"o}reskog, K.~G.
\newblock On the estimation of polychoric correlations and their asymptotic covariance matrix.
\newblock \emph{Psychometrika}, 59\penalty0 (3):\penalty0 381--389, 1994.

\bibitem[Kato(2013)]{kato2013perturbation}
Kato, T.
\newblock \emph{Perturbation theory for linear operators}, volume 132.
\newblock Springer Science \& Business Media, 2013.

\bibitem[Koller \& Friedman(2009)Koller and Friedman]{koller2009probabilistic}
Koller, D. and Friedman, N.
\newblock \emph{Probabilistic graphical models: principles and techniques}.
\newblock MIT press, 2009.

\bibitem[Kunisky()]{kuniskylecture}
Kunisky, D.~T.
\newblock Lecture notes on random matrix theory in data science and statistics.

\bibitem[Lord \& Novick(2008)Lord and Novick]{lord2008statistical}
Lord, F.~M. and Novick, M.~R.
\newblock \emph{Statistical theories of mental test scores}.
\newblock IAP, 2008.

\bibitem[Nerlove \& Press(1973)Nerlove and Press]{nerlove1973univariate}
Nerlove, M. and Press, S.~J.
\newblock \emph{Univariate and multivariate log-linear and logistic models}, volume 1306.
\newblock Rand Corporation, 1973.

\bibitem[Olsson(1979)]{olsson1979maximum}
Olsson, U.
\newblock Maximum likelihood estimation of the polychoric correlation coefficient.
\newblock \emph{Psychometrika}, 44\penalty0 (4):\penalty0 443--460, 1979.

\bibitem[Olsson et~al.(1982)Olsson, Drasgow, and Dorans]{olsson1982polyserial}
Olsson, U., Drasgow, F., and Dorans, N.~J.
\newblock The polyserial correlation coefficient.
\newblock \emph{Psychometrika}, 47:\penalty0 337--347, 1982.

\bibitem[Pearl(1988)]{Pearl88}
Pearl, J.
\newblock \emph{Probabilistic reasoning in intelligent systems: Networks of plausible inference}.
\newblock Morgan kaufmann, 1988.

\bibitem[Pearl et~al.(2000)]{pearl2000models}
Pearl, J. et~al.
\newblock Models, reasoning and inference.
\newblock \emph{Cambridge, UK: CambridgeUniversityPress}, 19, 2000.

\bibitem[Pesarin \& Salmaso(2010)Pesarin and Salmaso]{pesarin2010permutation}
Pesarin, F. and Salmaso, L.
\newblock The permutation testing approach: a review.
\newblock \emph{Statistica}, 70\penalty0 (4):\penalty0 481--509, 2010.

\bibitem[Ramsey(2014)]{ramsey2014scalable}
Ramsey, J.
\newblock A scalable conditional independence test for nonlinear, {Non-Gaussian} data.
\newblock \emph{arXiv preprint arXiv:1401.5031}, 2014.

\bibitem[Richardson(1996)]{richardson1996discovery}
Richardson, T.
\newblock A discovery algorithm for directed cyclic graphs.
\newblock In \emph{Proceedings of the Twelfth International Conference on Uncertainty in Artificial Intelligence}, 1996.

\bibitem[Rousseeuw \& Molenberghs(1993)Rousseeuw and Molenberghs]{rousseeuw1993transformation}
Rousseeuw, P.~J. and Molenberghs, G.
\newblock Transformation of non positive semidefinite correlation matrices.
\newblock \emph{Communications in Statistics--Theory and Methods}, 22\penalty0 (4):\penalty0 965--984, 1993.

\bibitem[Shah \& Peters(2018)Shah and Peters]{shah2018hardness}
Shah, R.~D. and Peters, J.
\newblock The hardness of conditional independence testing and the generalised covariance measure.
\newblock \emph{The Annals of Statistics}, 2018.

\bibitem[Silva et~al.(2006)Silva, Scheine, Glymour, and Spirtes]{Silva-linearlvModel}
Silva, R., Scheine, R., Glymour, C., and Spirtes, P.
\newblock Learning the structure of linear latent variable models.
\newblock \emph{Journal of Machine Learning Research}, 7\penalty0 (Feb):\penalty0 191--246, 2006.

\bibitem[Spirtes(2013)]{spirtes2013calculation-t-separation}
Spirtes, P.
\newblock Calculation of entailed rank constraints in partially non-linear and cyclic models.
\newblock In \emph{Proceedings of the Twenty-Ninth Conference on Uncertainty in Artificial Intelligence}, pp.\  606--615. AUAI Press, 2013.

\bibitem[Spirtes \& Glymour(1991)Spirtes and Glymour]{spirtes1991pc}
Spirtes, P. and Glymour, C.
\newblock An algorithm for fast recovery of sparse causal graphs.
\newblock \emph{Social Science Computer Review}, 9:\penalty0 62--72, 1991.

\bibitem[Spirtes et~al.(1995)Spirtes, Meek, and Richardson]{spirtes1995causal}
Spirtes, P., Meek, C., and Richardson, T.
\newblock Causal inference in the presence of latent variables and selection bias.
\newblock In \emph{Conference on Uncertainty in Artificial Intelligence}, 1995.

\bibitem[Spirtes et~al.(2000)Spirtes, Glymour, Scheines, and Heckerman]{spirtes2000causation}
Spirtes, P., Glymour, C.~N., Scheines, R., and Heckerman, D.
\newblock \emph{Causation, prediction, and search}.
\newblock MIT press, 2000.

\bibitem[Sullivant et~al.(2010)Sullivant, Talaska, and Draisma]{sullivant2010trek}
Sullivant, S., Talaska, K., and Draisma, J.
\newblock Trek separation for gaussian graphical models.
\newblock \emph{arXiv:0812.1938.}, 2010.

\bibitem[Sun et~al.(2025{\natexlab{a}})Sun, Yao, Dong, Liu, Liu, Qiu, and Zhang]{sun2025sampleefficient}
Sun, B., Yao, Y., Dong, X., Liu, Z., Liu, T., Qiu, Y., and Zhang, K.
\newblock A sample efficient conditional independence test in the presence of discretization.
\newblock In \emph{ICML}, 2025{\natexlab{a}}.

\bibitem[Sun et~al.(2025{\natexlab{b}})Sun, Yao, Hao, Qiu, and Zhang]{sun2024conditional}
Sun, B., Yao, Y., Hao, G.-Y., Qiu, Y., and Zhang, K.
\newblock A conditional independence test in the presence of discretization.
\newblock In \emph{ICLR}, 2025{\natexlab{b}}.

\bibitem[Welch(1990)]{welch1990construction}
Welch, W.~J.
\newblock Construction of permutation tests.
\newblock \emph{Journal of the American Statistical Association}, 85\penalty0 (411):\penalty0 693--698, 1990.

\bibitem[Winkler et~al.(2020)Winkler, Renaud, Smith, and Nichols]{winkler2020permutation}
Winkler, A.~M., Renaud, O., Smith, S.~M., and Nichols, T.~E.
\newblock Permutation inference for canonical correlation analysis.
\newblock \emph{Neuroimage}, 220:\penalty0 117065, 2020.

\bibitem[Zhang et~al.(2012)Zhang, Peters, Janzing, and Sch{\"o}lkopf]{zhang2012kernel}
Zhang, K., Peters, J., Janzing, D., and Sch{\"o}lkopf, B.
\newblock Kernel-based conditional independence test and application in causal discovery.
\newblock In \emph{Conference on Uncertainty in Artificial Intelligence}, 2012.

\end{thebibliography}
\bibliographystyle{icml2025}

%%%%%%%%%%%%%%%%%%%%%%%%%%%%%%%%%%%%%%%%%%%%%%%%%%%%%%%%%%%%%%%%%%%%%%%%%%%%%%%
%%%%%%%%%%%%%%%%%%%%%%%%%%%%%%%%%%%%%%%%%%%%%%%%%%%%%%%%%%%%%%%%%%%%%%%%%%%%%%%
% APPENDIX
%%%%%%%%%%%%%%%%%%%%%%%%%%%%%%%%%%%%%%%%%%%%%%%%%%%%%%%%%%%%%%%%%%%%%%%%%%%%%%%
%%%%%%%%%%%%%%%%%%%%%%%%%%%%%%%%%%%%%%%%%%%%%%%%%%%%%%%%%%%%%%%%%%%%%%%%%%%%%%%
\newpage
\appendix
\onecolumn

\section{Proofs}
\subsection{Proof of Theorem~\ref{thm:exchangeability}}

\exchangeability*

\begin{proof}[Proof of Theorem~\ref{thm:exchangeability}]

First,  $\hat{\Sigma}_{\srvec{X}}$,   $\hat{\Sigma}_{\srvec{Y}}$,
   and $\hat{\Sigma}_{\srvec{X},\srvec{Y}}$ by pseudo-likelihood, converge in probability to 
   ${\Sigma}_{\srvec{X}}$,
   ${\Sigma}_{\srvec{Y}}$,
   and ${\Sigma}_{\srvec{X},\srvec{Y}}$, respectively \citep{besag1974spatial,gourieroux1984pseudo,gourieroux2017consistent,fan2017high}.

Plus, as we need to apply the continuous mapping theorem, we show the continuity and uniqueness of SVD in what follows.
SVD is  not continuous only when the input matrix has repeated singular values. Specifically, if a matrix $A$ has distinct singular values, then SVD is continuous in the neighborhood of $A$, and unique only up to sign flip (chapter 2 section 5.3 of \citep{kato2013perturbation}).
Thus, to make use of the continuous mapping theorem, we assume that $\Sigma_\mathbf{X}^{-\frac{1}{2}}\Sigma_{\mathbf{X},\mathbf{Y}}\Sigma_\mathbf{Y}^{-\frac{1}{2}}$ does not have repeated singular values (the set of matrices with repeated singular values has Lebesgue measure zero (Lemma 1.4.2 in \citep{kuniskylecture}, also in \citep{bochnak2013real}).). To further eliminate the sign indeterminacy, we can just follow scikit-learn to impose the largest coefficient of each column in $U$ in absolute value is positive (svd flip in scikit-learn).

Given $(\hat{\Sigma}_\mathbf{X},\hat{\Sigma}_\mathbf{Y},\hat{\Sigma}_{\mathbf{X},\mathbf{Y}})\overset{p}{\to}(\Sigma_\mathbf{X},\Sigma_\mathbf{Y},\Sigma_{\mathbf{X},\mathbf{Y}})$,
we aim to show the desired asymptotic independence. Specifically we want to show (i) $\mathbf{C_X}_{k:} \overset{p}{\to} \mathbf{C_X}_{k:}^*$  and $\mathbf{C_Y}_{k:} \overset{p}{\to} \mathbf{C_Y}_{k:}^*$, and  (ii) $\mathbf{C_X}_{k:}^*,\mathbf{C_Y}_{k:}^*$ are independent under the null hypo.
\
Here $\mathbf{C_X}=A^T\mathbf{X},\mathbf{C_Y}=B^T\mathbf{Y}$,
$\mathbf{C_X}^*={A^*}^T\mathbf{X}$, and $\mathbf{C_Y}^*={{B^*}^T}\mathbf{Y}$, where $(A,B)$ and $(A^*,B^*)$  are produced by SVD using  estimated covariance and population one respectively as follows.
$$USV=\hat{\Sigma}_\mathbf{X}^{-\frac{1}{2}}\hat{\Sigma}_{\mathbf{X},\mathbf{Y}}\hat{\Sigma}_\mathbf{Y}^{-\frac{1}{2}},A=\hat{\Sigma}_\mathbf{X}^{-\frac{1}{2}T}U,B=\hat{\Sigma}_\mathbf{Y}^{-\frac{1}{2}T}V^T,~~~U^*S^*V^*  =\Sigma_\mathbf{X}^{-\frac{1}{2}}\Sigma_{\mathbf{X},\mathbf{Y}} \Sigma_\mathbf{Y}^{-\frac{1}{2}},A^*=\Sigma_\mathbf{X}^{-\frac{1}{2}T}U^*,B^*=\Sigma_\mathbf{Y}^{-\frac{1}{2}T}{V^*}^{T}.$$
For (i): By continuous mapping theorem, under the assumption of no repeated singular values, we have $U \overset{p}{\to} U^*$. As $\Sigma_\mathbf{X}$ is positive definite, the matrix inverse square root is continuous and thus 
$\hat{\Sigma}_\mathbf{X}^{-\frac{1}{2}T}\overset{p}{\to}\Sigma_\mathbf{X}^{-\frac{1}{2}T}$. Given $(U,\hat{\Sigma}_\mathbf{X}^{-\frac{1}{2}T})\overset{p}{\to}(U^*,\Sigma_\mathbf{X}^{-\frac{1}{2}T})$, we have $\hat{\Sigma}_\mathbf{X}^{-\frac{1}{2}T}U=A\overset{p}{\to}A^*=\Sigma_\mathbf{X}^{-\frac{1}{2}T}U^*$. Similarly, we have $B\overset{p}{\to}B^*$.
\
Thus $$((A^T-{A^*}^T)\mathbf{X},(B^T-{B^*}^T)\mathbf{Y})\overset{p}{\to}0\Rightarrow (((A^T-{A^*}^T)\mathbf{X})_{k:},((B^T-{B^*}^T)\mathbf{Y})_{k:})\overset{p}{\to}0\Rightarrow(\mathbf{C_X}_{k:},\mathbf{C_Y}_{k:})\overset{p}{\to}(\mathbf{C_X}_{k:}^*,\mathbf{C_Y}_{k:}^*).$$
For (ii): Under the null hypo, the cross-covariance between $\mathbf{C_X}_{k:}^*$ and $\mathbf{C_Y}_{k:}^*$ are all zeros. 
As $\mathbf{C_X}\_{k:}^*,\mathbf{C_Y}_{k:}^*$ are jointly gaussian (linear mixing of $\mathbf{X,Y}$),
zero cross-covariance implies independence.
   
   %
   %Under the null hypo that $\mathcal{H}^{k}_{0}: \text{rank}(\Sigma_{\srvec{X},\srvec{Y}})\leq k$,
   %we have that the population covariance between $\srvec{C_X}_{k:}$ and $\srvec{C_Y}_{k:}$
   %are all zeros. Given that all variables are jointly gaussian, 
   %$\srvec{C_X}_{k:}$ and $\srvec{C_Y}_{k:}$ are also jointly gaussian. 
   %Thus $\srvec{C_X}_{k:}$ and $\srvec{C_Y}_{k:}$ are asymptotically independent.
\end{proof}

\subsection{Proof of Lemma~\ref{lemma:alternative_way_statistic}}

\alternativewaystatistic*

\begin{proof}[Proof of Lemma~\ref{lemma:alternative_way_statistic}]
    The CCA scores between $\srvec{C_X}_{k:}$ and $\srvec{C_Y}_{k:}$ are just the diagonal entries 
    of their cross-covariance matrix, which corresponds to the $k$ to $K$ CCA scores between 
    $\srvec{X}$ and $\srvec{Y}$. Thus we have $\hat{r}_{i}=r_{i+k}$ for $i=\{1,...,K-k\}$,
    and thus $\lambda_{k}=-(N-\frac{P+Q+3}{2})\ln(\Pi_{i=k+1}^{K}(1-r_{i}^2))$.
\end{proof}

\subsection{Proof of Theorem~\ref{thm:statistic_by_perm}}

\statisticbyperm*

\begin{proof}[Proof of Theorem~\ref{thm:statistic_by_perm}]
    We are interested in   ${\hat{\Sigma}}_{{\srvec{C}_{\srvec{X}}}_{k:}}^{-\frac{1}{2}}
     {\hat{\Sigma}}_{{\srvec{C}_{\srvec{X}}}_{k:},{\srvec{C}_{\srvec{Y}}}_{k:}}
     {\hat{\Sigma}}_{{\srvec{C}_{\srvec{Y}}}_{k:}}^{-\frac{1}{2}}$.
    Assume that we have access to the original data ${\smat{D}}^{\srvec{X}}$ and ${\smat{D}}^{\srvec{Y}}$.
    By the exchangeability, for each random $\smat{P}$, we have 
    $(\smat{P}{\smat{D}}^{\srvec{X}}\smat{A})_{:,k:}$ and $({\smat{D}}^{\srvec{Y}}\smat{B})_{:,k:}$ are the $N$ samples
     from joint distribution of 
    ${\srvec{C}_{\srvec{X}}}_{k:}$ and ${\srvec{C}_{\srvec{Y}}}_{k:}$.
    Then the ${\hat{\Sigma}}_{{\srvec{C}_{\srvec{X}}}_{k:}}^{-\frac{1}{2}}$,
    ${\hat{\Sigma}}_{{\srvec{C}_{\srvec{X}}}_{k:},{\srvec{C}_{\srvec{Y}}}_{k:}}$,
    and ${\hat{\Sigma}}_{{\srvec{C}_{\srvec{Y}}}_{k:}}^{-\frac{1}{2}}$
    are as follows:
    \begin{align}
        \hat{\Sigma}_{{\srvec{C}_{\srvec{X}}}_{k:}}^{-\frac{1}{2}} &=  
         (\frac{((\smat{P}{\smat{D}}^{\srvec{X}}\smat{A})_{:,k:})^T
         (\smat{P}{\smat{D}}^{\srvec{X}}\smat{A})_{:,k:}}
         {N-1})^{-\frac{1}{2}},\\
         &=(\frac{((\smat{P}{\smat{D}}^{\srvec{X}}\smat{A})^T
         (\smat{P}{\smat{D}}^{\srvec{X}}\smat{A}))_{k:,k:}}{N-1})^{-\frac{1}{2}},\\
         &=(\frac{(\smat{A}^T {\smat{D}^{\srvec{X}}}^T {\smat{D}}^{\srvec{X}} \smat{A})_{k:,k:} }{N-1})^{-\frac{1}{2}},\\
         &=((\smat{A}^{T}{\hat{\Sigma}_{\srvec{X}}}\smat{A})_{k:,k:})^{-\frac{1}{2}}.
        \end{align}
    \begin{align}
        \hat{\Sigma}_{{\srvec{C}_{\srvec{Y}}}_{k:}}^{-\frac{1}{2}} &=  
            (\frac{(({\smat{D}}^{\srvec{Y}}\smat{B})_{:,k:})^T
            ({\smat{D}}^{\srvec{Y}}\smat{B})_{:,k:}}
            {N-1})^{-\frac{1}{2}},\\
            &=(\frac{(({\smat{D}}^{\srvec{Y}}\smat{B})^T
            ({\smat{D}}^{\srvec{Y}}\smat{B}))_{k:,k:}}{N-1})^{-\frac{1}{2}},\\
            &=(\frac{(\smat{B}^T {\smat{D}^{\srvec{Y}}}^T {\smat{D}}^{\srvec{Y}} \smat{B})_{k:,k:} }{N-1})^{-\frac{1}{2}},\\
            &=((\smat{B}^{T}{\hat{\Sigma}_{\srvec{Y}}}\smat{B})_{k:,k:})^{-\frac{1}{2}}.
        \end{align}
    \begin{align}
        \hat{\Sigma}_{{\srvec{C}_{\srvec{X}}}_{k:},{\srvec{C}_{\srvec{Y}}}_{k:}} &=
        \frac{((\smat{P}{\smat{D}}^{\srvec{X}}\smat{A})_{:,k:})^T
            ({\smat{D}}^{\srvec{Y}}\smat{B})_{:,k:}}
            {N-1},\\
            &=\frac{((\smat{P}{\smat{D}}^{\srvec{X}}\smat{A})^T
            {\smat{D}}^{\srvec{Y}}\smat{B})_{k:,k:}}
            {N-1},\\
            &=(\frac{(\smat{A}^T {\smat{D}^{\srvec{X}}}^{T}\smat{P}^T {\smat{D}}^{\srvec{Y}} \smat{B})_{k:,k:} }{N-1}),\\
            &=(\smat{A}^{T} \frac{ {\smat{D}^{\srvec{X}}}  ^{T}\smat{P}^{T} \smat{D}^{\srvec{Y}} }{N-1} \smat{B})_{k:,k:}.
        \end{align}

    Further, ${\tilde{\smat{D}}^{\srvec{X}}}$
  and $\smat{P}^{T} \tilde{\smat{D}}^{\srvec{Y}}$  can be taken as sampled from the joint distribution of two
  independent gaussian random vectors. As each of them are marginally gaussian, they are also jointly gaussian.
  Thus, 
    $\frac{ {\smat{D}^{\srvec{X}}}  ^{T}\smat{P}^{T} \smat{D}^{\srvec{Y}} }{N-1}$
 can be consistently estimated by maximizing likeilhood as in Eq.~\ref{eq:loss2}.
\end{proof}

\begin{figure}[t]
    \vspace{-0mm}
    \centering
    \includegraphics[width=0.5\textwidth]{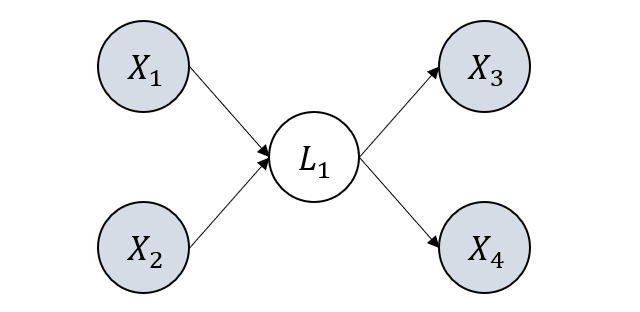}
  \caption{An illustrative example to show that rank contains more graphical information than CI.
   When using CI, we cannot deduce that $\{\srvec{X}_1,\srvec{X}_2\}$ and 
  $\{\srvec{X}_3,\srvec{X}_4\}$ are d-separated by $\srvec{L_1}$ as 
  $\srvec{L_1}$ is latent, 
  while by using rank we can.}
    \label{fig:trek_example}
    \vspace{-0mm}
  \end{figure}

\section{Other Definitions}
\subsection{T-separation}
The definitions of trek and t-separation are as follows.

\begin{definition} [Treks \citep{sullivant2010trek}]
   In $\mathcal{G}$, a trek from $\srv{X}$ to $\srv{Y}$ is an ordered pair of directed paths 
   $(P_1,P_2)$ where $P_1$ has a sink $\mathsf{X}$, 
   $P_2$ has a sink $\mathsf{Y}$,
    and both $P_1$ and $P_2$ have the same source $\mathsf{Z}$. 
    %The notation $\Sigma_{\mathbf{A},\mathbf{B}}$ denotes the cross-covariance matrix over two sets of variables, $\mathbf{A}$ and $\mathbf{B}$. 
\end{definition}

\begin{definition} [T-separation \citep{sullivant2010trek}]
Let $\mathbf{A}$, $\mathbf{B}$, $\mathbf{C}_{\mathbf{A}}$, 
and $\mathbf{C}_{\mathbf{B}}$ be four subsets of $\mathbf{V}_{\mathcal{G}}$ 
in graph $\mathcal{G}$ (not necessarilly disjoint). 
($\mathbf{C}_{\mathbf{A}}$,$\mathbf{C}_{\mathbf{B}}$) t-separates $\mathbf{A}$ from $\mathbf{B}$ if for every trek ($P_1$,$P_2$) from a vertex in $\mathbf{A}$ to a vertex in $\mathbf{B}$, either $P_1$ contains a vertex in  $\mathbf{C}_{\mathbf{A}}$ or $P_2$ contains a vertex in  $\mathbf{C}_{\mathbf{B}}$. 
\label{definition:t-sep}
\end{definition}

\begin{example}
In Figure~\ref{fig:trek_example},
there are multiple treks. For example,
$\srv{X_4}\leftarrow\srv{L_1}\rightarrow\srv{X_3}$ is a trek between $\srv{X_4}$ and $\srv{X_3}$,
$\srv{X_4}\leftarrow\srv{L_1}$ is a trek between $\srv{X_4}$ and $\srv{L_1}$,
and $\srv{L_1}\rightarrow\srv{X_3}$ is a trek between $\srv{L_1}$ and $\srv{X_3}$.
As for t-separations, we have 
$\{\srv{X_1},\srv{X_2}\}$ and $\{\srv{X_3},\srv{X_4}\}$ are t-separated by $(\emptyset,\{\srv{L_1}\})$.

\end{example}

\section{Discussion}
\subsection{Brief Introduction to Permutation Test}
\label{sec:appendix_intro_perm_test}
Permutation tests aim to empirically estimate the CDF of the null distribution of a test statistic.
The core of such an CDF estimation is the exchangeability, under which we can
make use of permuted data to serve as additional samples from the same distribution. 

Take Figure~\ref{fig:illustration_permutation}
as an example.
The left figure in Figure~\ref{fig:illustration_permutation} refer to $N$ i.i.d. samples from $P(\srvec{X},\srvec{Y})$. After random permutation on $\srvec{Y}$, the permutated data can be considered as random i.i.d. samples from $P(\srvec{X})$ and $P(\srvec{Y})$. If the exchangeability holds under the null hypothesis, i.e., random vectors $\srvec{X}$ and $\srvec{Y}$ are independent, then
  we have $P(\srvec{X},\srvec{Y})=P(\srvec{X})P(\srvec{Y})$,
  and thus the permuted data can serve as another $N$ i.i.d. samples from $P(\srvec{X},\srvec{Y})$.
  Now we know how to generate additional $N$ i.i.d. samples. As a test statistic is just a deterministic function of the $N$ i.i.d., samples.   For each randomly permuted data, we can calculate
  the value of the test statistic, and thus all these calculated test statistics can be considered as sampled from the distribution of the test statistic. Given these samples, we can 
  construct the empirical CDF of the null distribution, and consequently correctly calculate the p-value.

\begin{figure}[t]
    \vspace{-0mm}
    \centering
    \includegraphics[width=0.5\textwidth]{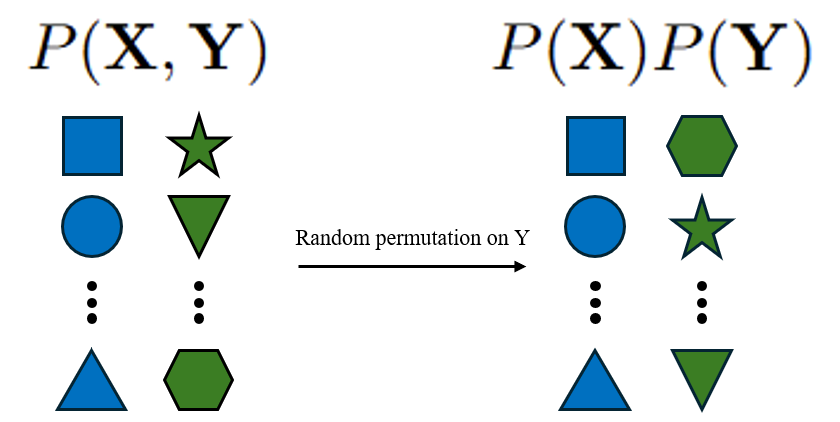}
  \caption{An illustration of exchangeability and permutation test. The left figure refer to $N$ i.i.d. samples from $P(\srvec{X},\srvec{Y})$. After random permutation on $\srvec{Y}$, the permutated data can be considered as random i.i.d. samples from $P(\srvec{X})$ and $P(\srvec{Y})$. If the exchangeability holds, i.e., random vectors $\srvec{X}$ and $\srvec{Y}$ are independent, then
  we have $P(\srvec{X},\srvec{Y})=P(\srvec{X})P(\srvec{Y})$,
  and thus the permuted data can serve as another $N$ i.i.d. samples from $P(\srvec{X},\srvec{Y})$.}
    \label{fig:illustration_permutation}
    \vspace{-0mm}
  \end{figure}

\subsection{Number of Categories and 
Analysis of Type-I error and Power}
 The proposed method can handle any level of discretization, as long as it is greater than 1, with Type-I errors properly controlled.
  At the same time, more levels are always beneficial, because it leads to less information loss during the discretization process, and thus the correlation matrix can be more efficiently estimated for building the test.
%\subsection{Whether the Proposed Method works for Discretized Linear Non-gaussian Data}
%If we assume that variables before discretization follow a linear non-gaussian causal model and we know the parametric form, 
%the proposed method can still work with some minor %modifications.
%To be specific, we can the likelihood function according to the corresponding parametric form for correlation estimation and the proposed method can still work. 
%As a comparison, the traditional CCA rank test must assume normality to infer the chi-square null distribution. 
%
%On the other hand, if the parametric form is not given, which means we know nothing about the shape of the distribution,
%it is almost impossible to consistently recover the underlying correlation (due to insufficient information), and thus the problem cannot be solved.

Regarding Type-I errors, as we establish the exchangeability even in the discretized scenario, the asymptotic null distribution can be estimated by random permutations. Consequently, Type-I errors can be properly controlled at any significance level. 
 At the same time, 
 we do not have theoretical result on the analysis of the power yet.
 To be specific,
 even without considering discretization, 
  the analysis of power involves tools from advanced random matrix theories and is highly nontrivial. 
  Furthermore, in our setting with discretized variables, the involved maximum likelihood step makes such an analysis even more challenging. To our best knowledge, there is not any existing result available for the analytic form of the power in our setting, and we plan to leave it for future exploration.

\section{Related Work}
\label{sec:relatedwork}

\textbf{Conditional independence and rank test.}~
A line of conditional independence tests imposes simplifying assumptions on the distributions. 
For instance, when the variables have linear relations with additive Gaussian noise, 
the Fisher's classical z-test based on partial correlations can be used \citep{fisher1924distribution,baba2004partial}. 
\citet{ramsey2014scalable} developed an approach that separately regresses $X$ and $Y$ on $Z$, 
and further perform independence test on the corresponding residuals. \citet{fukumizu2007kernel} proposed a conditional independence test method based on Hilbert-Schmidt independence criterion (HSIC) \citep{gretton2007kernel}. \citet{zhang2012kernel} further provided a kernel-based conditional test that yields pointwise asymptotic level control. \citet{shah2018hardness} investigated the hardness of conditional independence test, and developed a method based on kernel-ridge regression and generalised covariance measure. On the other hand, existing statistical tests for rank of a cross-covariance matrix \citep{ranktest}
often rely on CCA \citep{jordan1875essai,hotelling1992relations}, 
with a likelihood ratio based test statistics. Recently, \citet{sun2024conditional} also establishes a valid partial correlation  test in the presence of discretization, with a focus on the binary discretization scenario, and later \citet{sun2025sampleefficient} better solves this problem with general method of moment.
% Fan et al. adopted similar idea for linear additive noise model (under null hypothesis) and where Z can be high-dimensional

\textbf{Permutation test.}~
Research and applications related to permutation tests have addressed increased attention in 
recent years \citep{david2008beginnings,pesarin2010permutation,welch1990construction}. 
These tests lead to valid inferences while requiring weak assumptions that are commonly satisfied, base on the exchangeability of observations under the null hypothesis. Recently, a permutation-based CI test was proposed \citep{doran2014permutation}
and more recently a permutation-based rank test
\citep{winkler2020permutation}. However, they cannot deal with the discretization problem. In contrast,
our MPRT can take all continuous, partially discretized, or all discretized data as input,
and our Type I errors can be properly controlled.

\textbf{Constraint-based causal discovery.}~
Constraint-based methods leverage statistical tests, such as conditional independence tests,
 to estimate the causal structure. \citet{spirtes1991pc} proposed the PC algorithm that estimates 
 the skeleton and orient certain edges to identify the Markov equivalence class. 
 FCI \citep{spirtes1995causal,colombo2012learning} was developed to allow for latent and selection variables, 
 while the CCD algorithm \citep{richardson1996discovery} can accommodate cycles. Furthermore, \citet{huang2020causal} 
 developed a constraint-based method that allows for heterogeneity or non-stationarity in the data distribution, 
 while \citet{Silva-linearlvModel,huang2022latent,dong2023versatile} proposed algorithms based on rank test that recover 
 the causal structure involving latent confounders.

\end{document}